%% file: ptm.tex
\documentclass[11pt]{article}

\usepackage[final]{acl}

\usepackage{times}
\usepackage{latexsym}

\usepackage[T1]{fontenc}

\usepackage[utf8]{inputenc}

\usepackage{microtype}

\usepackage{inconsolata}

\usepackage{graphicx}

\usepackage{graphicx}
\usepackage{xspace}
\newcommand{\method}{\textsc{Prune-then-Merge}\xspace}

\title{Sculpting the Vector Space: Towards Efficient Multi-Vector \\Visual Document Retrieval via \method Framework}

\author{\textbf{Yibo Yan}$^{1,2,3}$, 
    \textbf{Mingdong Ou}$^{2,}$\thanks{Project Lead}, 
    \textbf{Yi Cao}$^{2}$, 
    \textbf{Xin Zou}$^{1,3}$, 
    \textbf{Jiahao Huo}$^{1,2}$,
    \textbf{Shuliang Liu}$^{1,3}$, \\
    \textbf{James Kwok}$^{3}$, 
    \textbf{Xuming Hu}$^{1,3,}$\thanks{Corresponding Author}\\
    $^1$Hong Kong University of Science and Technology (Guangzhou), \\
    $^2$Alibaba Cloud Computing, 
    $^3$Hong Kong University of Science and Technology\\
    \texttt{\href{mailto:yanyibo70@gmail.com}{yanyibo70@gmail.com}},
     \texttt{\href{mailto:xuminghu@hkust-gz.edu.cn}{xuminghu@hkust-gz.edu.cn}}
    \vspace{-3mm}
}

\usepackage[utf8]{inputenc} 
\usepackage[T1]{fontenc}    
\usepackage{url}            
\usepackage{booktabs}       
\usepackage{amsfonts}       
\usepackage{nicefrac}       
\usepackage{microtype}      
\usepackage{xcolor}
\definecolor{bluecite}{HTML}{0071BC}

\input{packages}

\usepackage[toc]{appendix}
\usepackage{etoc}
\usepackage{minitoc}
\etocsettocstyle{\section*{Contents of Technical Appendices}}{}

\begin{document}
\maketitle

\etocdepthtag.toc{mtchapter}
\etocsettagdepth{mtchapter}{subsection}
\etocsettagdepth{mtappendix}{none}

\begin{abstract}
Visual Document Retrieval (VDR), which aims to retrieve relevant pages within vast corpora of visually-rich documents, is of significance in current multimodal retrieval applications. The state-of-the-art multi-vector paradigm excels in performance but suffers from prohibitive overhead, a problem that current efficiency methods like pruning and merging address imperfectly, creating a difficult trade-off between compression rate and feature fidelity. To overcome this dilemma, we introduce \method, a novel two-stage framework that synergizes these complementary approaches. Our method first employs an adaptive pruning stage to filter out low-information patches, creating a refined, high-signal set of embeddings. Subsequently, a hierarchical merging stage compresses this pre-filtered set, effectively summarizing semantic content without the noise-induced feature dilution seen in single-stage methods. Extensive experiments on 29 VDR datasets demonstrate that our framework consistently outperforms existing methods, significantly extending the near-lossless compression range and providing robust performance at high compression ratios. 
\end{abstract}

\input{sections/1-introduction}
\input{sections/2-relatedwork}
\input{sections/3-methodology}

\input{sections/4-experiment}

\input{sections/5-conclusion}

\clearpage
\section*{Limitations}

\begin{itemize}
    \item The efficacy of the pruning stage is inherently tied to the reliability of the base LVLM’s internal attention mechanism as a proxy for patch importance. We plan to investigate more sophisticated, query-independent ranking metrics, such as gradient-based importance, to provide a more accurate signal for information filtering.

    \item The framework currently relies on predefined hyperparameters, such as the adaptation factor and merging factor, to balance the compression-performance trade-off. We will work toward developing more automated, data-driven strategies that can self-adapt these parameters based on document complexity and layout characteristics.

\end{itemize}

\section*{Acknowledgments}
This work was supported by the Alibaba Innovative Research (AIR) Program (Grant No.64575662); Alibaba Research Intern Program; National Natural Science Foundation of China (Grant No.62506318); Guangdong Provincial Department of Education Project (Grant No.2024KQNCX028); Scientific Research Projects for the Higher-educational Institutions (Grant No.2024312096), Education Bureau of Guangzhou Municipality; Guangzhou-HKUST(GZ) Joint Funding Program (Grant No.2025A03J3957), Education Bureau of Guangzhou Municipality.


\bibliography{ptm}

\clearpage
\input{sections/Appendix}




\end{document}

%% file: packages.tex
\usepackage{graphicx}
\usepackage{subcaption}
\usepackage{multirow}

\usepackage{amsmath}
\newtheorem{theorem}{Theorem}[section]

\newtheorem{proof}{Proof Sketch}[section]
\usepackage{paralist}

\newtheorem{corollary}{Corollary}

\usepackage[capitalize]{cleveref}
\crefname{section}{Sec.}{Secs.}
\Crefname{section}{Section}{Sections}
\Crefname{table}{Table}{Tables}
\crefname{table}{Tab.}{Tabs.}

\usepackage{microtype}
\usepackage{booktabs} %

\definecolor{darkgreen}{rgb}{0.0, 0.5, 0.0} 

\newcommand{\comt}[1]{#1}

\renewcommand{\comt}[1]{}

\usepackage[dvipsnames,table]{xcolor}

\usepackage{tabularx}
\usepackage{multicol}
\definecolor{myblue}{RGB}{235,235,250}

\usepackage{xspace}

\usepackage{pifont}
\usepackage{latexsym}
\usepackage{inconsolata}
\usepackage{arydshln}
\usepackage{enumitem}
\usepackage{wrapfig}
\usepackage{array}
\usepackage{epigraph}

\definecolor{lightpink}{RGB}{204, 231, 207} 
\definecolor{lightblue}{RGB}{210, 220, 250} 
\definecolor{lightgray}{RGB}{237, 237, 237} 

\definecolor{superlightred}{rgb}{0.99, 0.92, 0.92}
\definecolor{darkgreen}{RGB}{50,100,0}
\definecolor{darkred}{RGB}{200, 0, 0}

\usepackage{makecell}
\usepackage{colortbl}

\usepackage[ruled,vlined]{algorithm2e}

\usepackage{hyperref}
\usepackage{url}
\usepackage{hyperref}
\usepackage[most]{tcolorbox}
\usepackage{wrapfig}
\usepackage{graphicx,xcolor,float}
\usepackage{subcaption}
\usepackage{threeparttable}
\usepackage{pifont}
\usepackage{colortbl}
\usepackage{color}
\usepackage{multirow}
\usepackage{tabularx}
\usepackage{float}
\usepackage{graphicx}
\usepackage{booktabs}
\usepackage{arydshln}
\usepackage{enumitem}
\usepackage{wrapfig}
\usepackage{caption}
\usepackage{graphicx}
\usepackage{makecell}
\usepackage{float} 
\usepackage{makecell}
\usepackage{tabularx}
\usepackage{twemojis}
\usepackage{amssymb,mathrsfs,amsmath}
\usepackage{pifont}
\usepackage[export]{adjustbox}
\usepackage{xcolor}
\usepackage{xspace}
\usepackage{shadowtext}
\usepackage{anyfontsize}

\hypersetup{
    colorlinks=true,
    linkcolor=red,
    citecolor=cyan,
    filecolor=magenta,      
    urlcolor=magenta,
    }

\definecolor{bittersweet}{rgb}{1.0, 0.44, 0.37}
\definecolor{mygreen}{rgb}{0.29, 0.7, 0.48}
\definecolor{my_green}{RGB}{51,102,0}
\definecolor{my_yellow}{RGB}{255,165,0}
\definecolor{my_red}{RGB}{204, 0, 0}
\definecolor{my_orange}{RGB}{232, 132, 23}

\usepackage{pifont}
\definecolor{mygray}{gray}{0.4}
\hypersetup{
    colorlinks=true,
    linkcolor=red,
    citecolor=cyan,
    filecolor=magenta,      
    urlcolor=magenta,
    }
\usepackage{xcolor} 
\usepackage{xcolor}

\usepackage[font=small,labelfont=bf]{caption}  
\usepackage{makecell}
\usepackage{tabulary}
\definecolor{ada_green}{rgb}{0,205,205}
\definecolor{glt_red}{rgb}{109,205,255}

\definecolor{backred}{RGB}{255, 190, 190}
\definecolor{backblue}{RGB}{210, 230, 250}
\definecolor{backgrey}{RGB}{220, 220, 220}
\definecolor{green(pigment)}{rgb}{0.0, 0.65, 0.31}

\definecolor{shadecolor}{RGB}{237,237,237}



%% file: sections/1-introduction.tex
\section{Introduction}
\label{sec:introduction}

Visual Document Retrieval (VDR) is a critical task that involves retrieving relevant document pages from a vast corpus based on queries that often combine textual and visual cues\footnote{See Appendix \ref{app:illustrative examples} for more illustrative examples.}. 
In an era dominated by visually-rich documents such as reports, slides, and academic papers, VDR has become paramount for applications ranging from enterprise search to domain-specific Retrieval-Augmented Generation (RAG) \cite{zheng2025retrieval,zhao2023retrieving,gao2025scaling}. 
Unlike traditional text retrieval systems that rely on Optical Character Recognition (OCR) to extract textual content \cite{zhang2024ocr}, current approaches leverage Large Vision-Language Models (LVLMs) to process entire document pages as images \cite{jiang2024vlm2vec,meng2025vlm2vec}, as illustrated in Figure \ref{fig:vdr_paradigm_compare} (\textit{Left}). 
This preserves vital structural and layout information, enabling a more holistic, "what-you-see-is-what-you-get" understanding that is crucial for accurately interpreting tables, figures, and complex layouts.

\begin{figure}[t!]
    \centering
    \includegraphics[width=1 \linewidth]{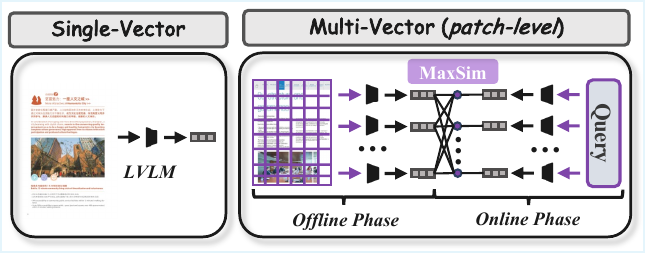}
    \vspace{-2em}
    \caption{Comparison of single-vec vs. multi-vec VDR.}
    \label{fig:vdr_paradigm_compare}
    \vspace{-6mm}
\end{figure}

The field has rapidly evolved from brittle OCR-based pipelines and page-level (single-vector) models to the current state-of-the-art: \textbf{patch-level (multi-vector) retrieval}, as illustrated in Figure \ref{fig:vdr_paradigm_compare} (\textit{Right}). 
Pioneered by seminal works like ColPali \cite{faysse2024colpali}, this paradigm represents each document page as a collection of patch-level embeddings \cite{nomicembedmultimodal2025,teiletche2025modernvbert}.
This fine-grained representation allows for a late-interaction mechanism, such as \texttt{MaxSim}, which computes nuanced, token-to-patch similarities \cite{khattab2020colbert}.
This capability to match a query's specific terms to precise regions within a document has proven to deliver superior retrieval performance, as it captures localized details that single-vector representations often miss.

Despite their exceptional performance, the widespread adoption of multi-vector models is hindered by a critical efficiency bottleneck: \textit{prohibitive storage and computational overhead}  \cite{santhanam2022plaid,chen2024dense,scheerer2025warp}. 
Storing hundreds or even thousands of vectors for every single document page makes large-scale deployment impractical and costly.
Consequently, recent research has focused on optimizing the efficiency, largely converging on two distinct paradigms.
The first is \textbf{pruning-based}, exemplified by DocPruner \cite{yan2025docpruner}, which adaptively discards less informative patches based on intra-document attention, as shown in Figure \ref{fig:vdr_paradigm_compare} (\textit{Left}). This approach can achieve a near-lossless performance with a moderate pruning rate but \textit{suffers from a sharp performance decline at higher compression ratios}.
The second is \textbf{merging-based}, as explored in Light-ColPali/ColQwen series \cite{ma2025towards}, which directly merges multiple patches into fewer vectors, as shown in Figure \ref{fig:vdr_paradigm_compare} (\textit{Right}). This method offers more graceful performance degradation at high compression rates and is simple to implement. However, its crude merging process can dilute discriminative features, leading to an \textit{unstable lossless performance range}.

\begin{figure}[t!]
    \centering
    \includegraphics[width=1 \linewidth]{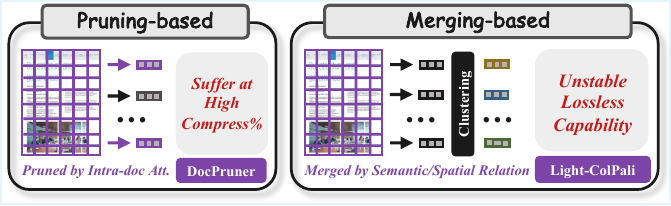}
    \vspace{-2em}
    \caption{Comparison of pruning-based vs. merging-based efficient VDR paradigms.}
    \label{fig:vdr_efficiency_compare}
    \vspace{-6mm}
\end{figure}

To leverage the strengths of these complementary approaches, we propose \textbf{\method}, a novel two-stage framework for efficient multi-vector VDR.
Our core logic is to \textit{first refine, then compress}. 
In the first stage, we employ an adaptive pruning mechanism to intelligently filter out low-information patches, such as empty spaces or decorative elements, leveraging pruning's ability to precisely identify and remove noisy vectors.
In the second stage, we apply hierarchical clustering to merge the remaining set of semantically rich patches.
By performing a more sophisticated merging operation on a pre-filtered, high-quality set of vectors, our method avoids the feature dilution pitfalls of naive merging and pushes beyond the compression limits of standalone pruning.

We conducted extensive experiments to validate our approach on 29 mainstream VDR datasets, integrating \method with three leading multi-vector models: ColQwen2.5 \cite{faysse2024colpali}, ColNomic \cite{nomicembedmultimodal2025}, and Jina-v4 \cite{gunther2025jina}.
Our results yield the following key findings:
\ding{182} Compared to state-of-the-art pruning work (DocPruner), \method extends the near-lossless compression range by an average of 10 percentage points, from [50-60\%] to [60-70\%].
\ding{183} At high compression rates (\textit{e.g.,} 80\% and above), our method circumvents the sharp performance cliff seen in pruning-only methods, consistently outperforming all baselines.

%% file: sections/2-relatedwork.tex
\section{Related Work}
\label{sec:related_work}

\subsection{Visual Document Retrieval}
\label{sec:vdr_paradigms}

The evolution of VDR can be broadly categorized into three successive paradigms. Early approaches were \textbf{OCR-based} ~\citep{xiao2024cpack,wang2023improving,karpukhin2020dense}, relying on OCR tools to extract text from document images, which was then indexed by conventional text retrievers. For a query $q$ and a document image $d$, this can be formulated as scoring based on the extracted text $T(d)$, \textit{i.e.,} $s(q, T(d))$. However, this process is \textit{often brittle and discards critical layout and non-textual information}, limiting its effectiveness on complex documents. The advent of LVLMs catalyzed a shift towards OCR-free methods. The first wave, \textbf{page-level retrieval} (or single-vector retrieval)~\citep{ma2024unifying,zhang2024gme,liu2025any}, encodes an entire document page $d$ and a query $q$ into single, holistic embeddings. Using an encoder $\Phi(\cdot)$, the relevance is typically computed as the similarity between these two vectors: $s(q, d) = \text{sim}(\Phi_q(q), \Phi_d(d))$
where $\Phi_q(q) \in \mathbb{R}^{D}$ and $\Phi_d(d) \in \mathbb{R}^{D}$. While this paradigm preserves the document's visual integrity, the single-vector representation often \textit{fails to capture fine-grained details necessary for precise matching}.

To overcome this, the current paradigm has converged on \textbf{patch-level retrieval} (or multi-vector retrieval)~\citep{masry2024colflor,nomicembedmultimodal2025,gunther2025jina,masry2025colmate}. Pioneered by ColPali~\cite{faysse2024colpali}, this approach represents a document page as a set of multiple patch-level embeddings $\mathbf{D} = \{\mathbf{d}_j\}_{j=1}^{N_p}$, where $\mathbf{d}_j \in \mathbb{R}^{D}$, and the query as a set of token-level embeddings $\mathbf{Q} = \{\mathbf{q}_i\}_{i=1}^{N_q}$. Relevance is then computed via a late-interaction mechanism like \texttt{MaxSim}:
\begin{equation}
    s(q, d) = \sum_{i=1}^{N_q} \max_{j=1}^{N_p} \mathbf{q}_i^\top \mathbf{d}_j .
    \label{eq:maxsim}
\end{equation}

\subsection{Efficient VDR}
\label{sec:efficient_vdr}

Multi-vector VDR models are hampered by a significant efficiency bottleneck: prohibitive storage overhead. Storing a set of $N_p$ embeddings for each document page results in a storage cost of $O(N_p \times D)$ per page, which is orders of magnitude greater than the $O(D)$ cost of single-vector models. This challenge has spurred research into two primary efficiency optimization paradigms.

The first is \textbf{pruning-based} strategies, which aim to discard redundant or less informative patch embeddings during the offline indexing stage~\cite{zong2025towards,lassance2022learned}. DocPruner~\cite{yan2025docpruner} is a representative work in this area, introducing an adaptive mechanism that leverages intra-document attention to identify and remove patches. However, pruning methods often face \textit{a sharp performance drop at higher rates}. The second paradigm is \textbf{merging-based} strategies, which reduce the number of embeddings by combining multiple patches into a smaller set of vectors~\cite{clavie2024reducing,macavaney2025efficient}. Light-ColPali~\cite{ma2025towards} explores this by applying techniques like spatial pooling or semantic clustering. However, the averaging process inherent in merging can \textit{dilute the distinctive features of highly salient patches}, often leading to a less stable performance. Furthermore, recent MetaEmbed~\cite{xiao2025metaembed} compresses tokens via a budget-based compression rate (\textit{e.g.,} Matryoshka embeddings), but it requires well-designed training and architecture change.
Therefore, we propose \method, a hybrid framework that synergizes these two paradigms to overcome their individual limitations. 

See more related works in Appendix \ref{app:more_related_work}.

%% file: sections/3-methodology.tex
\section{Methodology}
\label{sec:methodology}

In this section, we first formalize the task setting of multi-vector VDR. We then introduce \method, detailing its two-stage mechanism.

\subsection{Task Setting}
\label{sec:task_setting}

The task of VDR is to retrieve a ranked list of relevant document pages from a corpus $\mathcal{C} = \{d_1, d_2, \dots, d_{|\mathcal{C}|}\}$ for a given textual query $q$. In the multi-vector retrieval paradigm, both queries and documents are represented as sets of embedding vectors. A LVLM-based encoder, denoted as $\Phi(\cdot)$, maps a textual query $q$ into a set of $N_q$ token-level embeddings $\mathbf{Q} = \{\mathbf{q}_i\}_{i=1}^{N_q}$, where each $\mathbf{q}_i \in \mathbb{R}^{D}$ and $D$ is the embedding dimension. Similarly, a document page $d$, rendered as an image, is processed by the same encoder $\Phi(\cdot)$ to produce a set of $N_p$ patch-level embeddings $\mathbf{D} = \{\mathbf{d}_j\}_{j=1}^{N_p}$, where each $\mathbf{d}_j \in \mathbb{R}^{D}$.

Following the late-interaction mechanism, the relevance score $s(q, d)$ between the query and the document is computed via the \texttt{MaxSim} operation as defined in Equation~\eqref{eq:maxsim}. The primary challenge addressed in this work is the storage overhead of $O(N_p \times D)$ per page. Our objective is to generate a compressed set of document embeddings $\mathbf{D}''$ with size $N_p'' = |\mathbf{D}''|$, such that $N_p'' \ll N_p$, thereby substantially reducing storage costs while minimizing degradation in retrieval performance.

\subsection{The \method~Framework}
\label{sec:our_framework}

\method~is a \textbf{query-agnostic, offline compression} framework designed to synergize the precision of pruning with the high-ratio compression capability of merging. The framework operates in two sequential stages: (1) an adaptive pruning stage to filter out low-information patches, followed by (2) a hierarchical merging stage to compress the remaining semantically rich patches. See our pseudo-code in Appendix \ref{app:algo_workflow} for reference.

\subsubsection{Stage 1: Adaptive Pruning}

The first stage aims to create an intermediate, refined set of patch embeddings by discarding those with low informational content, such as background or decorative elements. This is achieved by leveraging the LVLM's internal attention mechanism as a proxy for patch importance.

Formally, during the forward pass of a document image $d$ through the encoder $\Phi(\cdot)$, we extract the attention weights $\mathbf{A}^{(L)} \in \mathbb{R}^{H \times S \times S}$ from the final transformer layer, where $H$ is the number of attention heads and $S$ is the sequence length. We average the weights across all heads to obtain a smoothed attention map $\bar{\mathbf{A}}^{(L)} \in \mathbb{R}^{S \times S}$. The importance score $I(\mathbf{d}_j)$ for the $j$-th patch is defined as the attention it receives from a global token (\textit{e.g.,} the \texttt{[EOS]} token): $I(\mathbf{d}_j) = \bar{\mathbf{A}}^{(L)}_{\text{eos}, j}$.
This yields a vector of importance scores $\mathcal{I}_d = \{I(\mathbf{d}_j)\}_{j=1}^{N_p}$.

We then compute a document-specific adaptive threshold $\tau_d$ based on the statistical properties of these scores: $\mu_d = \frac{1}{N_p} \sum_{j=1}^{N_p} I(\mathbf{d}_j) \quad$; $\quad \sigma_d = \sqrt{\frac{1}{N_p} \sum_{j=1}^{N_p} (I(\mathbf{d}_j) - \mu_d)^2}$; $\tau_d = \mu_d + k \cdot \sigma_d$, where $k$ is a hyperparameter controlling the pruning strictness. The intermediate set of pruned embeddings, $\mathbf{D}'$, is formed by retaining only patches whose importance exceeds this threshold: $\mathbf{D}' = \{\mathbf{d}_j \in \mathbf{D} \mid I(\mathbf{d}_j) > \tau_d\}$.
To ensure robustness, if this process yields an empty set, we retain the single patch with the highest importance score. The resulting set $\mathbf{D}'$ contains $N_p' = |\mathbf{D}'|$ embeddings, where $N_p' < N_p$.

\subsubsection{Stage 2: Hierarchical Merging}

The second stage takes the filtered set of high-quality embeddings $\mathbf{D}'$ and further compresses it through semantic clustering. This avoids the feature dilution that occurs when merging is applied to the full, noisy set of patches.

Given the intermediate set $\mathbf{D}' \subset \mathbb{R}^{N_p' \times D}$ and a merging factor $m$, we first determine the target number of clusters $N_p''$: $N_p'' = \max(1, \lfloor N_p' / m \rfloor)$.
We then perform hierarchical agglomerative clustering on $\mathbf{D}'$. The process involves:
\begin{enumerate}
    \item \textbf{Normalization:} All embeddings in $\mathbf{D}'$ are L2-normalized.
    \item \textbf{Distance Matrix:} A pairwise distance matrix $\mathbf{\Delta} \in \mathbb{R}^{N_p' \times N_p'}$ is computed based on cosine distance between the normalized embeddings.
    \item \textbf{Clustering:} A linkage algorithm (\textit{e.g.,} Ward's method) is applied to $\mathbf{\Delta}$ to build a hierarchy, which is then partitioned into $N_p''$ clusters. This yields a cluster assignment label for each embedding in $\mathbf{D}'$.
\end{enumerate}
For each of the $N_p''$ clusters, a new representative embedding is generated by computing the centroid (mean) of all member embeddings. This results in the final, compressed set of embeddings $\mathbf{D}'' = \{\mathbf{d}''_c\}_{c=1}^{N_p''}$, where each centroid $\mathbf{d}''_c \in \mathbb{R}^D$. This two-stage process effectively reduces the initial $N_p$ embeddings to a much smaller set of $N_p''$ semantically rich representations.

\subsubsection{Scoring with Pruned-and-Merged Embeddings}
During online retrieval, the relevance score is computed using the final compressed embedding set $\mathbf{D}''$. The query $q$ is encoded into $\mathbf{Q}$ as usual, and the \texttt{MaxSim} score is calculated over the reduced search space: $s''(q, d) = \sum_{i=1}^{N_q} \max_{c=1}^{N_p''} \mathbf{q}_i^\top \mathbf{d}''_c.$
By performing the expensive \texttt{MaxSim} operation on a significantly smaller set of embeddings ($N_p'' \ll N_p$), our framework drastically reduces online computational costs and offline storage requirements, while preserving high retrieval accuracy.

\subsection{Theoretical Guarantee}
\label{sec:theoretical_guarantee}

The empirical success of the \method~framework is underpinned by a sound theoretical basis. We posit that its superiority stems from decomposing a single, complex compression problem into a sequence of two more tractable, specialized sub-problems: (1) query-agnostic information filtering via pruning, and (2) redundancy reduction via merging. This decomposition allows for a more effective approximation of the ideal, yet intractable, Information Bottleneck (IB) objective.

\subsubsection{The Optimization Problem as an IB}

The overarching goal is to compress the full patch embedding set $\mathbf{D}$ into a compact set $\mathbf{D}''$ that retains maximum relevance to any potential query. Following the IB principle, this can be formulated as: $\max_{\mathbf{D}''} \quad \mathbb{E}_{q \sim P(q)}[I(\mathbf{D}''; s(q, \mathbf{D}))] - \beta I(\mathbf{D}''; \mathbf{D})
    \label{eq:ib_ideal}$
where $s(q, \mathbf{D})$ is the relevance score and $P(q)$ is the unknown query distribution. This problem is intractable. Our framework approximates this by decomposing the overall compression mapping $g: \mathbf{D} \to \mathbf{D}''$ into a composition of two functions, $g = g_m \circ g_p$, where $g_p$ is the pruning map and $g_m$ is the merging map.

\subsubsection{Pruning as Query-Agnostic Information Filtering}

The first stage, pruning, acts as an information filter. We assume the full patch set $\mathbf{D}$ can be partitioned into a high-information signal set $\mathbf{D}_{\text{sig}}$ and a low-information noise set $\mathbf{D}_{\text{noi}}$. The key assumption is that for any patch $\mathbf{d}_{\text{noi}} \in \mathbf{D}_{\text{noi}}$, its contribution to the document's global semantics (represented by the \texttt{[EOS]} token's hidden state $\mathbf{h}_{\text{eos}}$) is negligible: $I(\mathbf{d}_{\text{noi}}; \mathbf{h}_{\text{eos}}) \approx 0, \quad \forall \mathbf{d}_{\text{noi}} \in \mathbf{D}_{\text{noi}}$.
The objective of the pruning stage $g_p$ is to produce an intermediate set $\mathbf{D}'$ that approximates the true signal set, $\mathbf{D}' \approx \mathbf{D}_{\text{sig}}$, by maximizing the preserved information about the document's global meaning: $\mathbf{D}' = g_p(\mathbf{D}) = \arg\max_{\hat{\mathbf{D}} \subset \mathbf{D}} \quad I(\hat{\mathbf{D}}; \mathbf{h}_{\text{eos}})$,
subject to a compression constraint. Our adaptive thresholding mechanism, which leverages the attention scores $I(\mathbf{d}_j) \propto I(\mathbf{d}_j; \mathbf{h}_{\text{eos}})$, serves as a practical solver for this objective. This filtering step is crucial as it transforms the input for the next stage from a low Signal-to-Noise Ratio (SNR) set $\mathbf{D}$ to a high-SNR set $\mathbf{D}'$.

\subsubsection{Merging as Rate-Distortion Optimization}

The second stage, merging, addresses the semantic redundancy within the filtered set $\mathbf{D}'$. This is a classic vector quantization problem, which can be framed under Rate-Distortion theory. The goal is to find a "codebook" $\mathbf{D}''$ of size $N_p''$ (the "rate") that minimizes the information loss, or "distortion," with respect to the high-SNR signal $\mathbf{D}'$. The distortion is quantified by Mean Squared Error (MSE). The objective of merging map $g_m$ is: $\mathbf{D}'' = g_m(\mathbf{D}') = \underset{\hat{\mathbf{D}}'' \subset \mathbb{R}^D, |\hat{\mathbf{D}}''|=N_p''}{\arg\min} \mathbb{E}_{\mathbf{d}_j \in \mathbf{D}'} \left[ \min_{\hat{\mathbf{d}} \in \hat{\mathbf{D}}''} ||\mathbf{d}_j - \hat{\mathbf{d}}||^2_2 \right]$.
Hierarchical clustering followed by centroid computation is a highly effective algorithm for solving this. For a given partition of $\mathbf{D}'$ into clusters $\{C_c\}_{c=1}^{N_p''}$, the optimal centroid $\mathbf{d}''_c$ for each cluster that minimizes the intra-cluster variance is its mean: $\mathbf{d}''_c = \underset{\mathbf{v} \in \mathbb{R}^D}{\arg\min} \sum_{\mathbf{d}_j \in C_c} ||\mathbf{d}_j - \mathbf{v}||^2_2 = \frac{1}{|C_c|} \sum_{\mathbf{d}_j \in C_c} \mathbf{d}_j$.
This stage efficiently reduces redundancy by creating a compact, summary representation of the core semantic concepts.

\subsubsection{The Synergistic Advantage}
The efficacy of \method~arises from the synergy between these two stages. Let $\mathbf{d}^*_c(\mathcal{S})$ denote the optimal centroid for a cluster within a set $\mathcal{S}$. A naive, single-stage merging approach operates directly on the noisy set $\mathbf{D} = \mathbf{D}_{\text{sig}} \cup \mathbf{D}_{\text{noi}}$. Its resulting centroids are inherently biased estimators of the true semantic concepts:
\begin{equation}
    \mathbf{d}^*_c(\mathbf{D}) = \frac{1}{|C_c|} \left( \sum_{\mathbf{d}_j \in C_c \cap \mathbf{D}_{\text{sig}}} \mathbf{d}_j + \sum_{\mathbf{d}_k \in C_c \cap \mathbf{D}_{\text{noi}}} \mathbf{d}_k \right).
\end{equation}
The noise vectors $\mathbf{d}_k$ pull the centroids away from the true signal's center of mass. In contrast, our framework first isolates $\mathbf{D}' \approx \mathbf{D}_{\text{sig}}$, so the subsequent merging stage computes centroids $\mathbf{d}^*_c(\mathbf{D}')$ that are largely unbiased by noise. Consequently, the distortion of the final representation with respect to the \textit{true signal} is significantly lower for our method. Let $\mathbf{D}''_{\text{ours}} = g_m(g_p(\mathbf{D}))$ and $\mathbf{D}''_{\text{naive}} = g_m(\mathbf{D})$. We can assert: $\mathbb{E}_{\mathbf{d}_j \in \mathbf{D}_{\text{sig}}} \left[ \min_{\hat{\mathbf{d}} \in \mathbf{D}''_{\text{ours}}} ||\mathbf{d}_j - \hat{\mathbf{d}}||^2_2 \right] \ll \mathbb{E}_{\mathbf{d}_j \in \mathbf{D}_{\text{sig}}} \left[ \min_{\hat{\mathbf{d}} \in \mathbf{D}''_{\text{naive}}} ||\mathbf{d}_j - \hat{\mathbf{d}}||^2_2 \right]$.
By decomposing the problem, \method~ensures that the final compact representation is a more faithful summary of the document's essential information, leading to a superior performance-compression trade-off.

See more theoretical analysis in Appendix \ref{app:more_theory}.

%% file: sections/4-experiment.tex
\section{Experiments and Analysis}
\label{sec:experiment}

\subsection{Experimental Setup}
\label{sec:experimental_setup}

\paragraph{Benchmarks \& Evaluation.}
Our experimental validation is performed on a comprehensive suite of six representative VDR benchmarks, totaling 29 distinct datasets (more details in Appendix \ref{app:benchmark_details}). These include \textbf{ViDoRe-V1}~\citep{faysse2024colpali}, \textbf{ViDoRe-V2}~\citep{macé2025vidorebenchmarkv2raising}, \textbf{JinaVDR-Bench}~\citep{gunther2025jina}, \textbf{REAL-MM-RAG}~\citep{wasserman2025realmmragrealworldmultimodalretrieval}, \textbf{ViDoSeek}~\citep{wang2025vidorag}, and \textbf{MMLongBench-Doc}~\citep{ma2024mmlongbench}. We integrate our framework with three leading multi-vector models to serve as our base models: \textbf{ColQwen2.5}~\citep{faysse2024colpali}, \textbf{ColNomic}~\citep{nomicembedmultimodal2025}, and \textbf{Jina Embeddings v4}~\citep{gunther2025jina}. Following standard VDR practices~\citep{faysse2024colpali,gunther2025jina,xu2025llama}, we adopt \textbf{nDCG@5} as the primary evaluation metric.

\paragraph{Baselines.}
We benchmark our \method{} against three categories of baselines. See detailed elaboration of baselines in Appendix \ref{app:baseline_details}.

\begin{figure*}[!t]
\centering
\includegraphics[width=\linewidth]{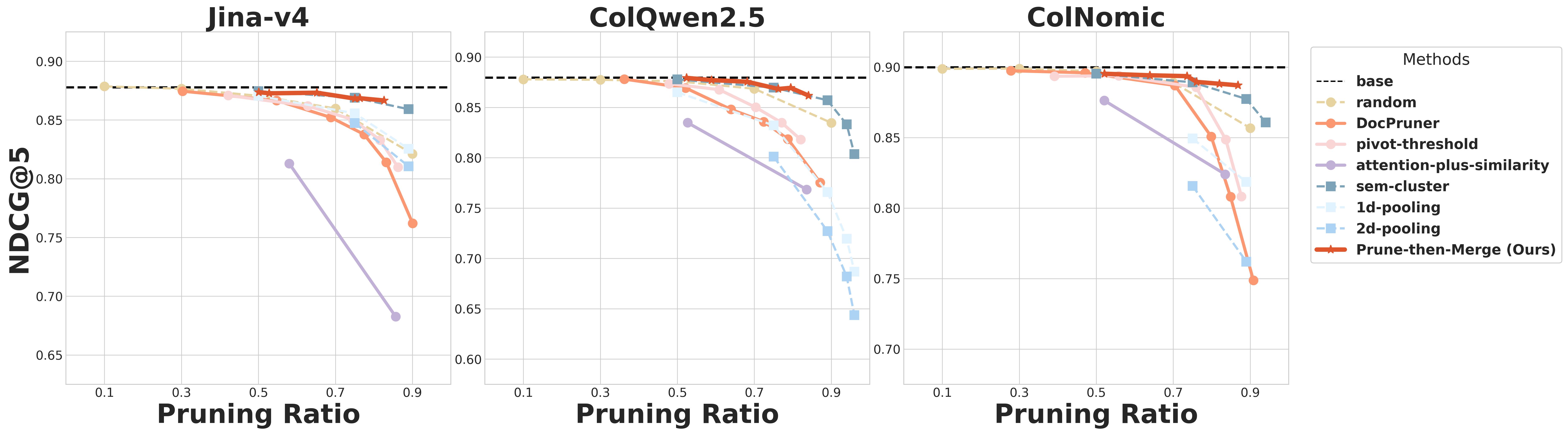}
\vspace{-2em}
\caption{Performance comparison (nDCG@5) between \method and baselines on ViDoRe-V1 \citep{faysse2024colpali} across  Jina-v4 (\textbf{\textit{Left}}), ColQwen2.5 (\textbf{\textit{Middle}}), and ColNomic (\textbf{\textit{Right}}). \textit{solid lines} denote adaptive methods, whereas \textit{dashed lines} denote non-adaptive ones; \textit{circular nodes} represent pruning methods, whereas \textit{square nodes }represent merging ones.} 
\label{fig:vidore_v1_performance_all}
\vspace{-1.3em}
\end{figure*}

\textbf{(I) Base Models.} These are the original, uncompressed multi-vector models. They serve as the performance ceiling and storage-cost upper bound.

\textbf{(II) Pruning-based Methods.} We compare against four strategies that reduce vector count:
\begin{itemize}[leftmargin=1em,itemsep=-0.1em]
    \item[$\blacktriangleright$] \textbf{Random:} A naive baseline that randomly removes a fixed fraction of patch embeddings. Its hyperparameter is the \texttt{pruning\_ratio}.
    \item[$\blacktriangleright$] \textbf{Attention-plus-Similarity:} An adaptive approach that prunes patches based on a combined score of importance (\texttt{[EOS]} attention) and representativeness (\texttt{[EOS]} similarity), following~\citet{wen2025token}. Key hyperparameters are an adaptation factor \texttt{k} and a weighting factor \texttt{alpha}.
    \item[$\blacktriangleright$] \textbf{Pivot-Threshold:} A two-stage adaptive method inspired by VisPruner~\citep{zhang2025vispruner}, which first filters patches via an adaptive attention threshold and then de-duplicates the resulting set based on similarity to pivot tokens. Hyperparameters include an importance factor \texttt{k}, a de-duplication factor \texttt{k\_dup}, and \texttt{num\_pivots}.
    \item[$\blacktriangleright$] \textbf{DocPruner}: A state-of-the-art adaptive method that discards less informative patches based on the intra-document attention distribution, following ~\citet{yan2025docpruner}. Its primary hyperparameter is the adaptation factor \texttt{k}.
\end{itemize}

\textbf{(III) Merging-based Methods.} Following~\citet{ma2025towards}, we compare three merging strategies:
\begin{itemize}[leftmargin=1em,itemsep=-0.1em]
    \item[$\blacktriangleright$] \textbf{Sem-Cluster:} Applies hierarchical clustering to patch embeddings and uses cluster centroids as the merged representation. The tunable hyperparameter is the \texttt{merging\_factor}.
    \item[$\blacktriangleright$] \textbf{1D-Pooling:} Groups sequential patch embeddings and reduces them via 1D average pooling. The hyperparameter is the \texttt{merging\_factor}, defining the pooling window size.
    \item[$\blacktriangleright$] \textbf{2D-Pooling:} Organizes embeddings into a 2D grid and applies 2D average pooling. The \texttt{merging\_factor} must be a perfect square.
\end{itemize}

\paragraph{Implementation Details.}
To ensure fair comparisons, we reproduced the results for all base models in alignment with their official implementations. Our evaluation framework is built upon the official ViDoRe Benchmark repository\footnote{\url{https://github.com/illuin-tech/vidore-benchmark}}. For \method, we explore a range of hyperparameters: the adaptation factor $k$ for the pruning stage is selected from $\{-1, -0.75, -0.5\}$, and the merging factor $m$ for the merging stage is chosen from $\{2, 4\}$. Detailed hyperparameter settings for all baselines are provided in Appendix \ref{app:baseline_details}. All experiments were conducted on a cluster of NVIDIA A100 (80GB) GPUs. The complete codebased will be made publicly available upon acceptance.

\subsection{Experimental Analysis}
\label{sec:experimental_analysis}

In this section, we conduct a comprehensive experimental analysis to rigorously evaluate \method framework\footnote{See all empirical experimental results in Appendix \ref{app:empirical_exp_results}.}, which is guided by five research questions (RQs): 
\textbf{(RQ1)} How does \method~maintain its superior performance across a wide variety of visual document types? 
\textbf{(RQ2)} Does the performance advantage of \method~generalize robustly to multilingual retrieval scenarios?
\textbf{(RQ3)} Can the framework's effectiveness extend to more complex, real-world settings that require semantic understanding instead of keyword matching?
\textbf{(RQ4)} What is the performance gap between \method~framework and its variants?
\textbf{(RQ5)} What are the quantifiable gains in storage efficiency that \method~achieves?

\subsubsection{VDR Performance Comparison}
\label{sec:rq1_perf_diverse_types}

\begin{figure*}[!t]
\centering
\includegraphics[width=\linewidth]{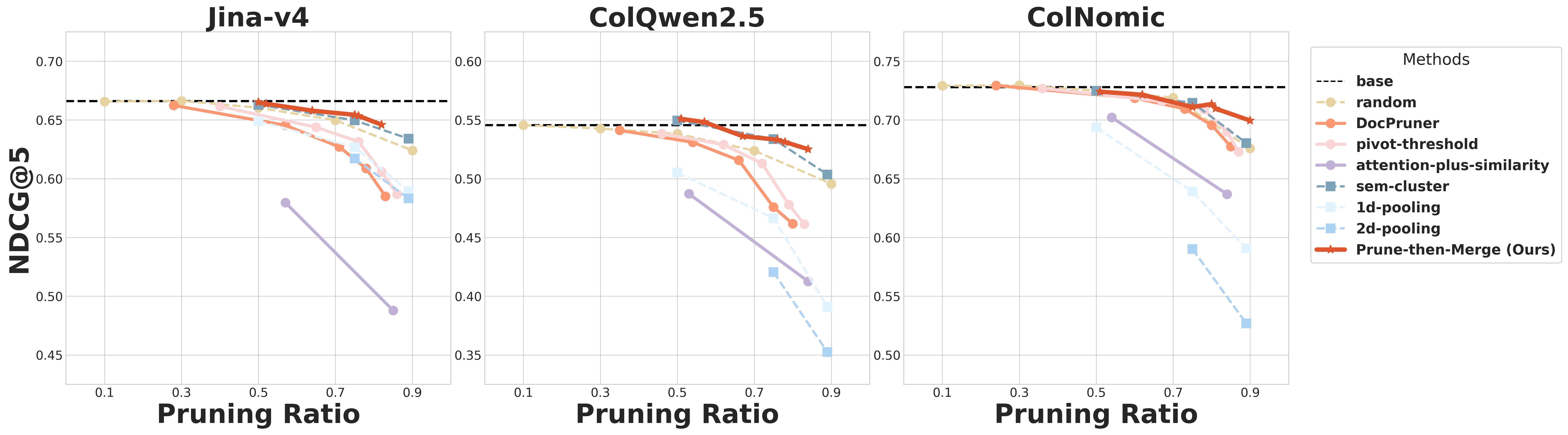}
\vspace{-2em}
\caption{Performance comparison (nDCG@5) between \method and baselines on JinaVDR \citep{gunther2025jina} across  Jina-v4 (\textbf{\textit{Left}}), ColQwen2.5 (\textbf{\textit{Middle}}), and ColNomic (\textbf{\textit{Right}}). \textit{solid lines} denote adaptive methods, whereas \textit{dashed lines} denote non-adaptive ones; \textit{circular nodes} represent pruning methods, whereas \textit{square nodes }represent merging ones.}
\label{fig:jinavdr_performance_all}
\end{figure*}

\begin{figure*}[!t]
\centering
\includegraphics[width=\linewidth]{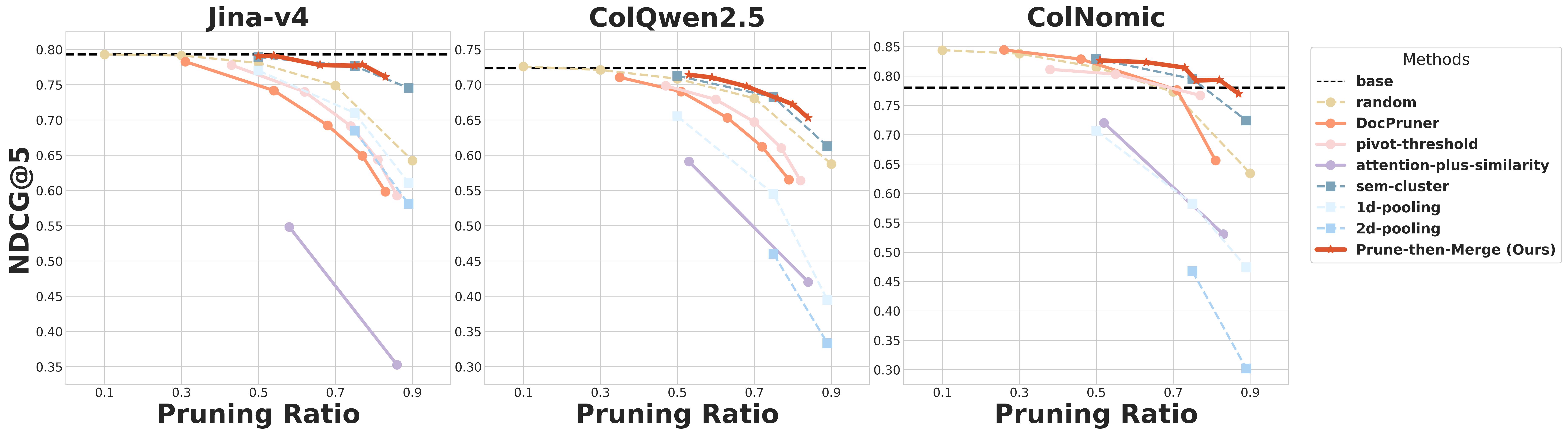}
\vspace{-2em}
\caption{Performance comparison (nDCG@5) between \method and baselines on REAL-MM-RAG \citep{wasserman2025realmmragrealworldmultimodalretrieval} across  Jina-v4 (\textbf{\textit{Left}}), ColQwen2.5 (\textbf{\textit{Middle}}) \& ColNomic (\textbf{\textit{Right}}). \textit{solid lines} denote adaptive methods, whereas \textit{dashed lines} denote non-adaptive ones; \textit{circular nodes} represent pruning methods, whereas \textit{square nodes }represent merging ones.}
\label{fig:realmmrag_performance_all}
\vspace{-1.3em}
\end{figure*}

Our \method framework consistently demonstrates near-lossless performance at high compression rates, maintaining base performance up to approximately a 70\% pruning rate across a diverse array of 16 datasets from four major VDR benchmarks (We only leave ViDoRe-V1 illustration in main text due to space limit. See the rest in Appendix \ref{app:vdr_performance_comparison}.).
For instance, on the comprehensive ViDoRe-V1 (Figure \ref{fig:vidore_v1_performance_all}), our method maintains the base nDCG@5 of 0.87 with ColQwen2.5, even after compressing the embeddings by 68\%.

At extremely high compression rates (80-90\%), where the performance of baselines typically collapses, \method consistently establishes its superiority over both pruning- and merging-only baselines.
Pruning-only methods like DocPruner suffer a sharp performance cliff beyond a 70-80\% pruning rate.
For example, on ViDoRe-V1 with ColQwen2.5 at an \textasciitilde84-87\% pruning rate, \method achieves an nDCG@5 of 0.86, whereas DocPruner drops sharply to 0.77.
While merging-based methods like Sem-Cluster exhibit more graceful degradation, our framework still frequently matches or outperforms them.
This advantage stems from our method’s ability to avoid two failure modes: it circumvents the drastic information loss of aggressive pruning by first identifying and then summarizing semantic clusters, and it avoids feature dilution of naive merging by operating on a pre-filtered, high-signal set of embeddings.

A notable observation is that \method appears to have a slightly extended near-lossless compression range when integrated with the Jina-v4 model.
For example, on both the ViDoSeek and MMLongBench-Doc, our framework maintains the full baseline performance of the Jina-v4 model up to a 75\% pruning rate.
In contrast, a pure pruning approach like DocPruner experiences a significant performance drop at a similar compression level on these datasets (\textit{e.g.,} from a baseline of 0.54 to 0.48 on MMLongBench-Doc at a 77\% rate).
We speculate this is linked to Jina-v4's unique training paradigm, which simultaneously optimizes for both single-vector (pooled) and multi-vector representations \cite{gunther2025jina}.
This inherent training for pooling likely makes its embeddings more amenable to merging operations.

\subsubsection{Generalization to Multilingual Scenarios}
\label{sec:rq2_perf_multilingual}

Our framework's effectiveness generalizes robustly across the nine diverse languages in the JinaVDR, proving its utility in global-scale retrieval systems.
The results, shown in Figure~\ref{fig:jinavdr_performance_all}, demonstrate that \method consistently maintains a superior performance-compression trade-off regardless of the language.
For example, when paired with ColQwen2.5, our method achieves an overall nDCG@5 of 0.52 at an 84\% compression rate, outperforming DocPruner, which scores 0.46 at a lower 80\% rate.
This language-agnostic capability stems from our reliance on the VLM's universal, pre-trained understanding of visual and semantic structures, allowing the framework to identify and compress information without being biased by language-specific features. Due to space limit, see more discussion in Appendix \ref{app:generalization_multilingual}.

\subsubsection{Generalization to Complex Settings}
\label{sec:rq3_perf_complex_settings}

The superiority of \method extends to the challenging REAL-MM-RAG benchmark, which is designed to test deep semantic understanding through non-extractive, rephrased queries.
Across all three base models, our framework consistently outperforms baselines, especially at high compression rates, as shown in Figure~\ref{fig:realmmrag_performance_all}.
With the ColQwen2.5 model, for instance, \method achieves an nDCG@5 of 0.65 at an aggressive 84\% compression rate, clearly surpassing DocPruner (0.56) and Sem-Cluster (0.61) at similar or lower rates.
This proves that by distilling documents into a set of core semantic centroids, our method creates a representation that is robust to abstract, rephrased queries.

Our framework's advantage is especially critical for dense, text-heavy document formats, where pruning-only methods falter dramatically under high compression.
On the dense \texttt{Financial Report} dataset with Jina-v4, DocPruner's performance plummets from a 0.69 baseline to just 0.44 at an 84\% pruning rate, marking a 36\% relative drop.
In stark contrast, \method gracefully degrades to only 0.66 at a comparable 83\% rate, demonstrating remarkable stability.
This suggests that dense documents contain high levels of semantic redundancy that are poorly handled by aggressive pruning; our method's merging stage excels in this scenario by summarizing semantically related text chunks into robust centroids, preserving the document's holistic meaning far more effectively. See more discussion in Appendix \ref{app:generalization_complex setting}.

\subsubsection{Variant Analysis}
\label{sec:rq4_variant_analysis}

\vspace{-2mm}
\begin{figure}[h!]
    \centering
    \includegraphics[width=0.7 \linewidth]{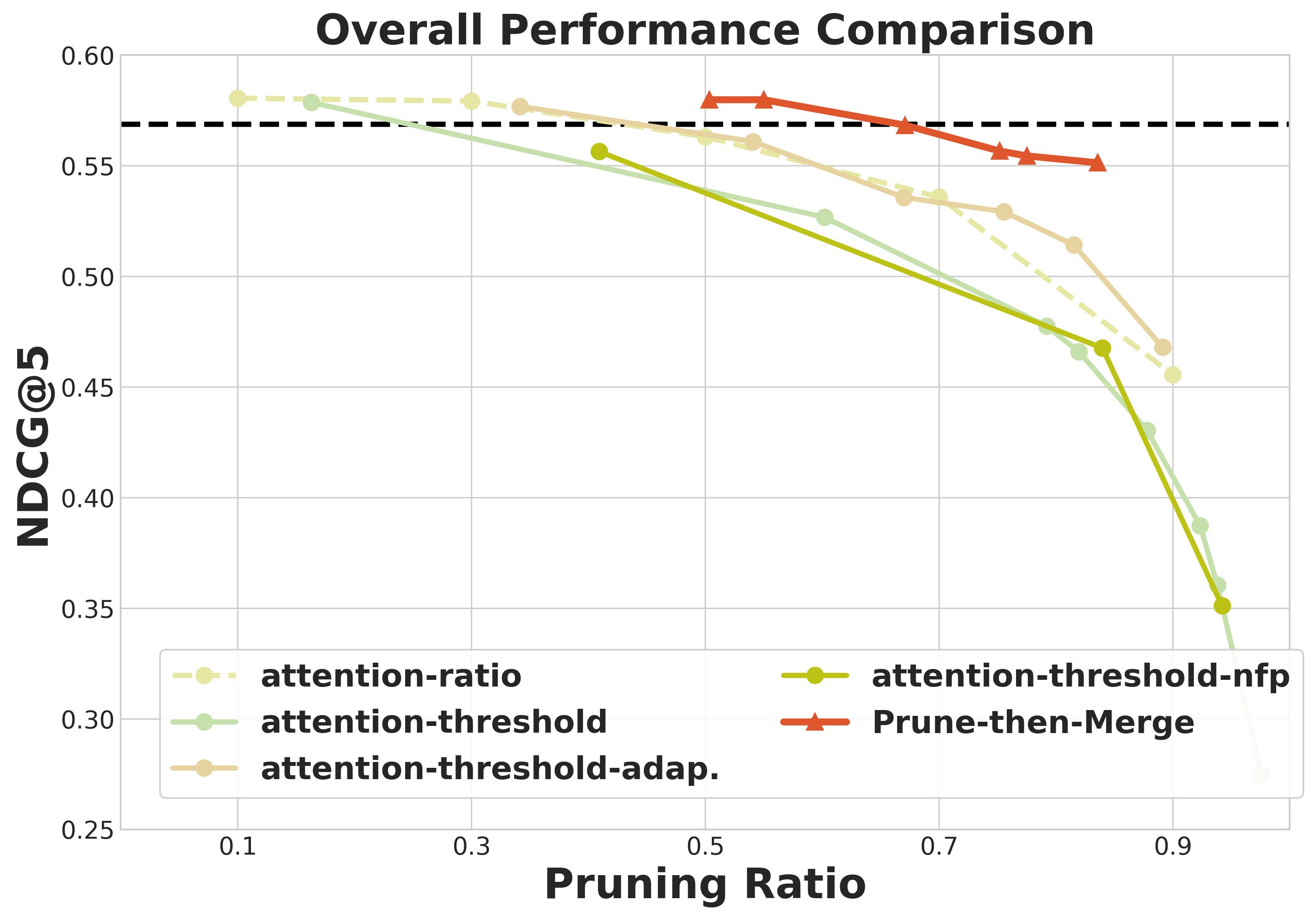}
    \caption{Variant comparisons of Jina-v4 on ViDoRe-V2.}
    \label{fig:variant_study_vidore_v2_jina}
    \vspace{-2mm}
\end{figure}

We conduct a variant analysis on the ViDoRe-V2 benchmark using the Jina-v4, as shown in Figure \ref{fig:variant_study_vidore_v2_jina}. 
We compare \method against several variants: \textbf{(I) \texttt{attention-ratio}}, a non-adaptive baseline that removes a fixed proportion of patches with the lowest attention scores; \textbf{(II) \texttt{attention-threshold}}, a static variant that discards patches based on a single, globally-defined attention threshold; \textbf{(III) \texttt{attention-threshold-nfp}}, which augments the static threshold approach by prepending a noise-filtering prompt to guide the model’s focus; and \textbf{(IV) \texttt{attention-threshold-adap}}, our adaptive pruning stage which is identical to DocPruner.

The complete \method framework consistently outperforms its standalone pruning component (\texttt{attention-threshold-adap}), underscoring the critical contribution of the subsequent merging stage, especially at high compression rates.
While our framework and its pruning-only counterpart perform identically at lower compression rates, a clear performance gap emerges as compression becomes more aggressive.
For example, at an approximately 80\% compression rate, \method maintains a robust nDCG@5 of 0.55, whereas the adaptive pruning-only variant drops to 0.51.
This confirms our central hypothesis: the initial pruning stage creates a high-quality, refined set of embeddings that allows the subsequent merging stage to operate more effectively, summarizing semantic concepts without the distortion caused by noise, a limitation inherent to pruning-only approaches at high compression.
See more discussion in Appendix \ref{app:variant_analysis}.

\subsubsection{Efficiency Analysis}
\label{sec:rq5_efficiency_analysis}
\method achieves a compelling balance between compression ratio and retrieval fidelity, effectively reducing storage costs by over half with minimal impact on accuracy. As detailed in Table \ref{tab:efficiency_analysis_vidore_v1_jina} (See details in Appendix \ref{app:efficiency_analysis}), the framework yields a substantial overall storage reduction of 54.60\% across the three base models, reaching a peak reduction of 58.88\% for ColQwen2.5. Despite this aggressive compression, retrieval performance remains remarkably robust, with the average nDCG@5 decreasing by only a marginal 0.45\% relative to the base models. While our approach increases the average per-document encoding latency from 0.46s to 0.69s, this overhead is incurred exclusively during the offline indexing stage and remains well within a perfectly acceptable range for real-world applications, especially considering that traditional OCR-based retrieval pipelines (\textit{e.g.,} OCR paired with BGE-M3 \cite{bge_m3}) can often exceed 7 seconds per page.

\begin{table}
 \centering
  \caption{
  Relative improvement of \method wrt. \textbf{performance}, \textbf{storage}, and \textbf{latency} to base models on ViDoRe-V1 ({We set adaptation factor as -0.75 and merging factor as 4; \color{RedOrange}orange} denotes better and {\color{green(pigment)}green} denotes worse).
  }
  \label{tab:efficiency_analysis_vidore_v1_jina}
  \vspace{-0.8em}
  \renewcommand\tabcolsep{5pt}
  \renewcommand\arraystretch{1.1}
  \footnotesize 
  \begin{tabular}{c|ccc} 
    \Xhline{1.2pt}
    \rowcolor{CadetBlue!20} 
    \textbf{$\Delta$} & \textbf{ColQwen} & \textbf{ColNomic} & \textbf{JinaV4}\\ 
    \Xhline{1pt}
    \textbf{nDCG@5} & {\color{green(pigment)} $\downarrow$0.27\%} & {\color{green(pigment)} $\downarrow$0.51\%} & {\color{green(pigment)} $\downarrow$0.57\%} \\
    \rowcolor{gray!10}\textbf{Storage} & {\color{RedOrange} $\downarrow$58.88\%} & {\color{RedOrange} $\downarrow$52.16\%} & {\color{RedOrange} $\downarrow$52.77\%}  \\
    \textbf{Latency} & {\color{green(pigment)} $\uparrow$56.10\%} & {\color{green(pigment)} $\uparrow$51.11\%} & {\color{green(pigment)} $\uparrow$47.06\%} \\
    \Xhline{1.2pt}
  \end{tabular}
  \vspace{-1.5em}
\end{table}

%% file: sections/5-conclusion.tex
\section{Conclusion}
\label{conclusion}
\vspace{-2mm}
In this work, we addressed the critical efficiency bottleneck in multi-vector VDR by proposing \method, a novel two-stage compression framework. 
Our framework uniquely synergizes the precision of pruning with the high-ratio compression of merging, following a ‘first refine, then compress’ paradigm. 
Through extensive experiments, we demonstrated that our approach not only extends the near-lossless compression range but also maintains superior performance at aggressive compression rates. 
Ultimately, \method{} provides a blueprint for advancing the practical applicability of multi-vector models.

%% file: sections/Appendix.tex
\newpage
\appendix
\hypersetup{linkcolor=black}
\etocdepthtag.toc{mtappendix}
\etocsettagdepth{mtchapter}{none}
\etocsettagdepth{mtappendix}{section}
\etocsettagdepth{mtappendix}{subsubsection}
\tableofcontents
\clearpage

{\large\textbf{Technical Appendices and Supplements}}

\section{Illustrative Examples}
\label{app:illustrative examples}

See Figures \ref{fig:example_vidorev1}, \ref{fig:example_vidorev2}, \ref{fig:example_jinavdr}, and \ref{fig:example_realmmrag}
for illustrative examples from representative VDR benchmarks. Figure~\ref{fig:example_prune1} illustrates the core rationale for our framework by comparing attention-based and random pruning. At low pruning rates (\textit{e.g.,} 10\%), attention-based pruning is clearly superior, as it selectively removes non-informative patches like whitespace. However, this advantage diminishes at higher ratios (\textit{e.g.,} 90\%), where both methods cause a catastrophic loss of content, as even critical information must be discarded to meet the aggressive compression target. This trend reveals a fundamental limitation of pruning-centric methods like DocPruner: while effective at filtering noise at moderate rates, they suffer a sharp performance cliff under high compression. Therefore, \method{} is specifically designed to circumvent this issue. We leverage pruning at a rate where it excels—refining the signal by removing noise—and then apply a merging stage to achieve higher compression by summarizing the remaining high-signal embeddings, thus avoiding the destructive information loss inherent in aggressive, pruning-only approaches.

\begin{figure*}[ht!]
  \centering
  \includegraphics[width=0.8\textwidth]{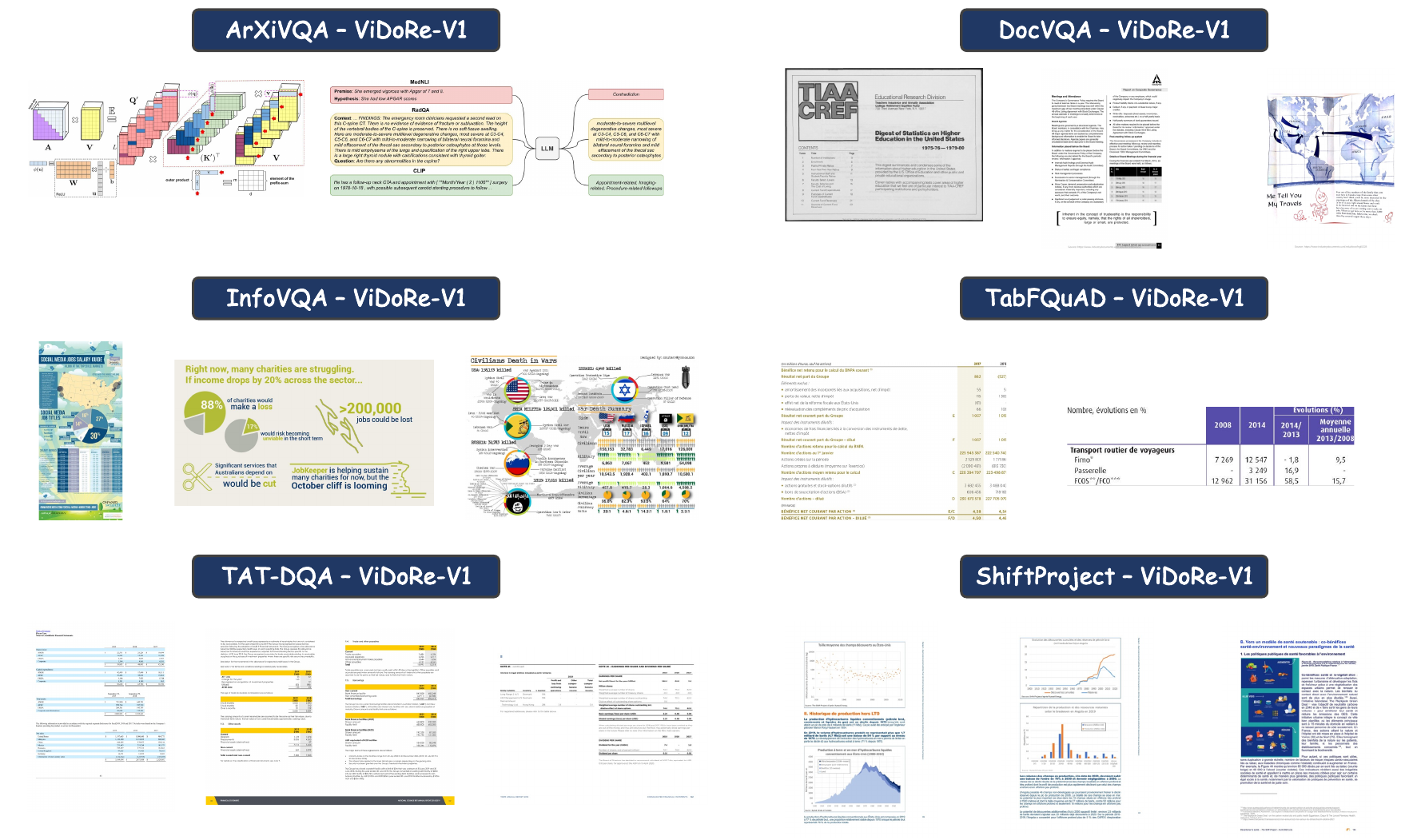}
  \caption{Illustrative examples of ViDoRe-V1 benchmark \cite{faysse2024colpali}.}
\label{fig:example_vidorev1}
\end{figure*}

\begin{figure*}[ht!]
  \centering
  \includegraphics[width=0.8\textwidth]{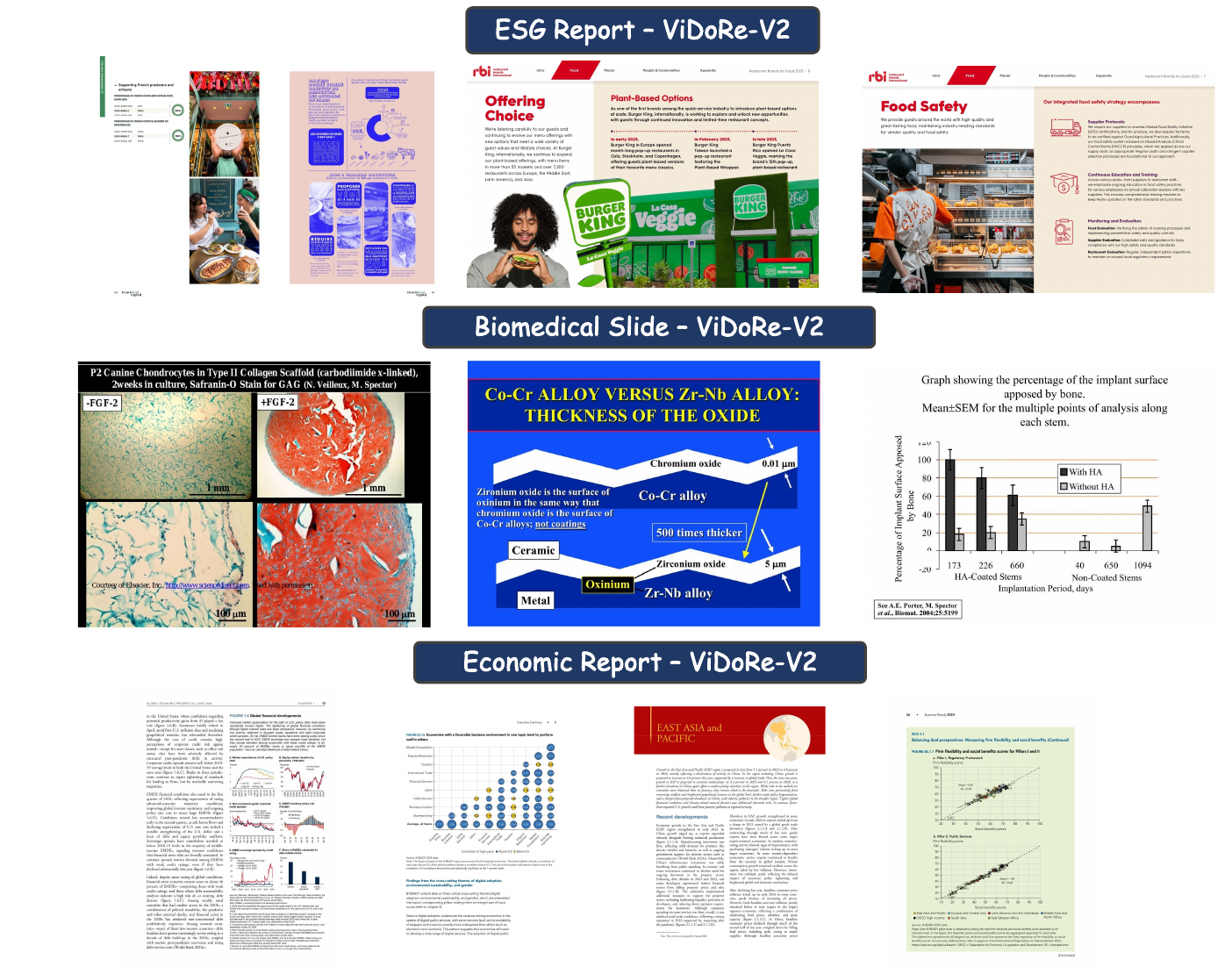}
  \caption{Illustrative examples of ViDoRe-V2 benchmark \cite{macé2025vidorebenchmarkv2raising}.}
\label{fig:example_vidorev2}
\vspace{-3mm}
\end{figure*}

\begin{figure*}[ht!]
  \centering
  \includegraphics[width=\textwidth]{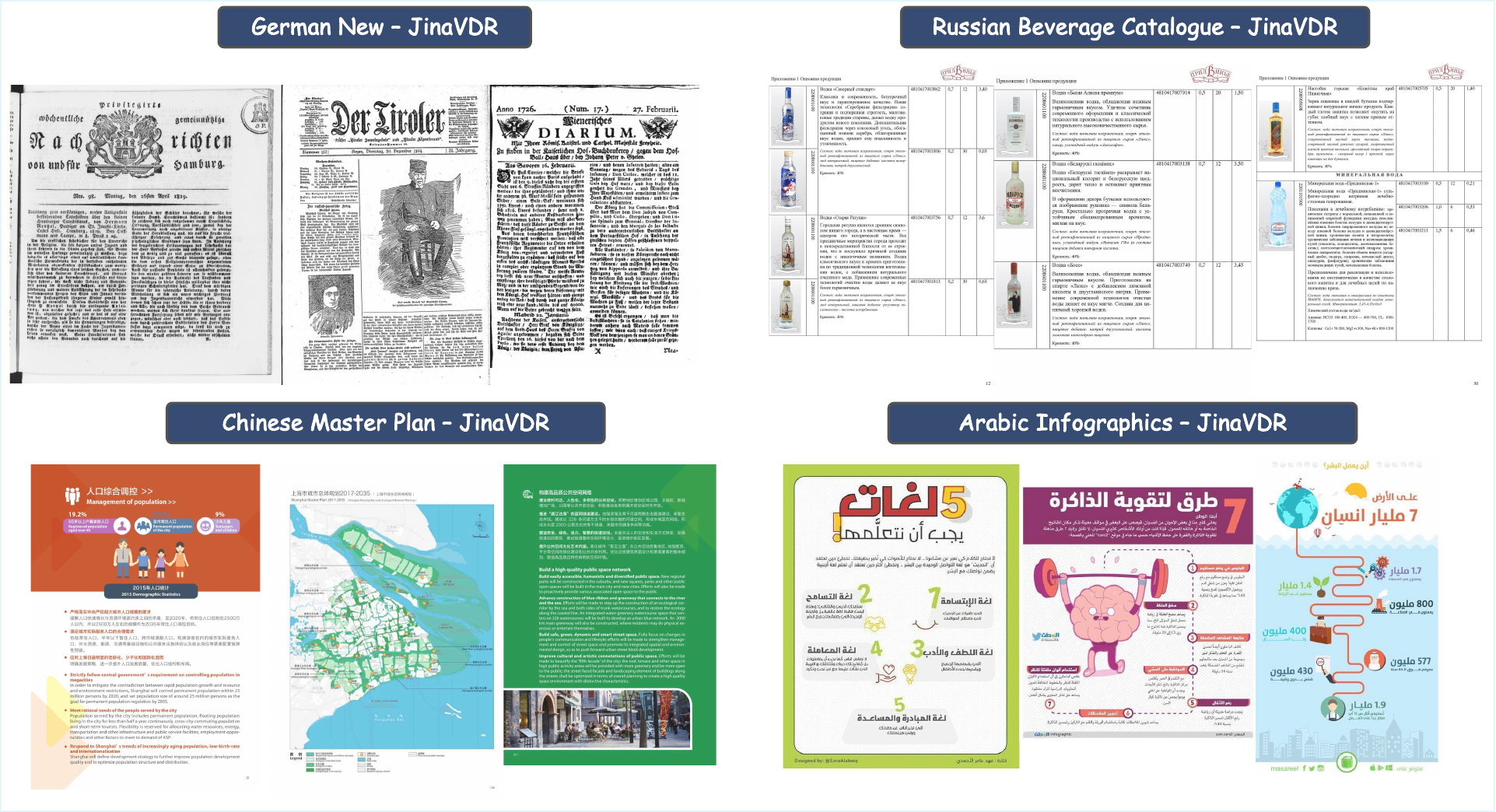}
  \caption{Illustrative examples of JinaVDR benchmark \cite{gunther2025jina}.}
\label{fig:example_jinavdr}
\end{figure*}

\begin{figure*}[ht!]
  \centering
  \includegraphics[width=\textwidth]{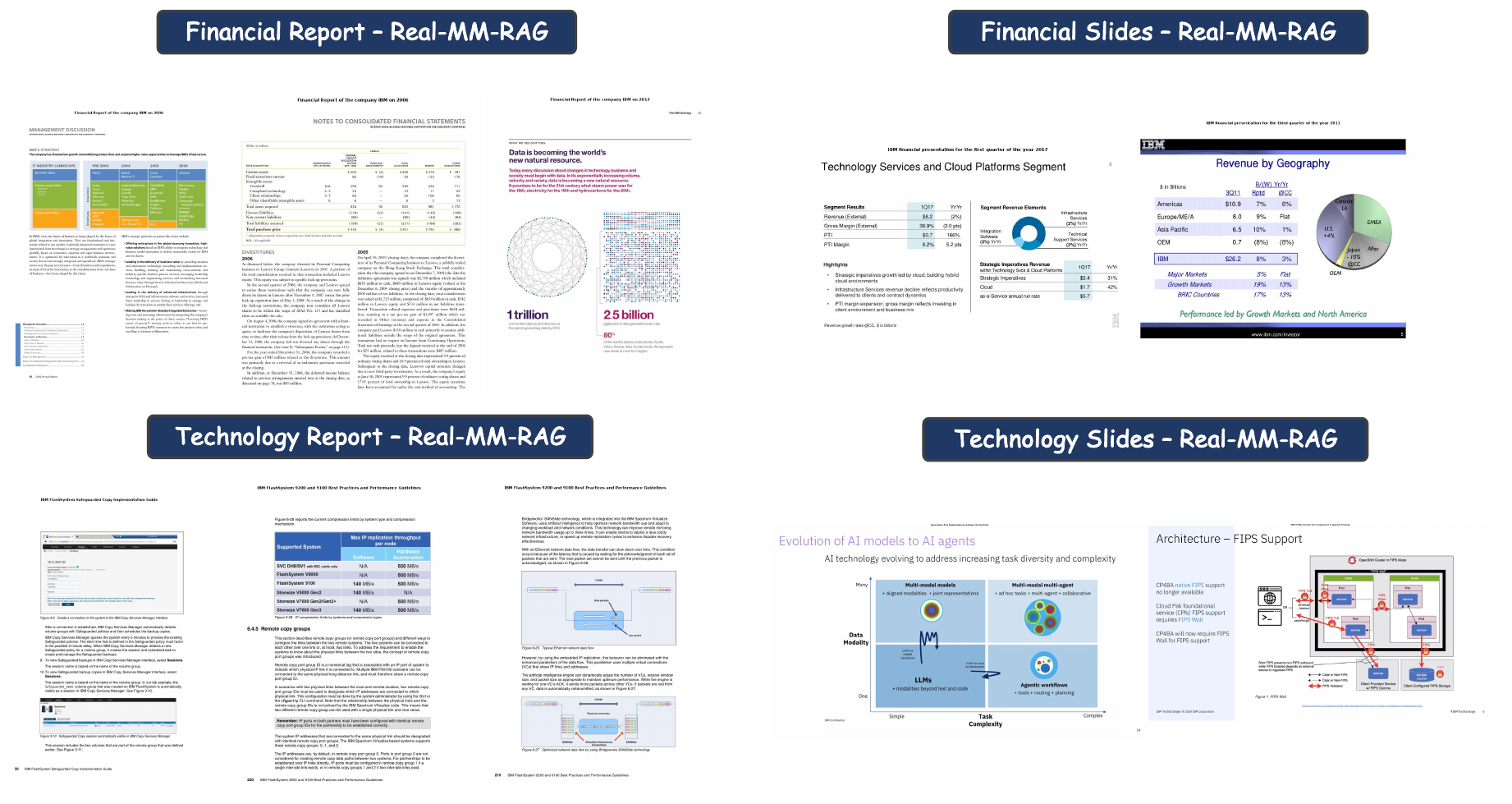}
  \caption{Illustrative examples of Real-MM-RAG benchmark \cite{wasserman2025realmmragrealworldmultimodalretrieval}.}
\label{fig:example_realmmrag}
\end{figure*}

\begin{figure*}[ht!]
  \centering
  \includegraphics[width=0.8\textwidth]{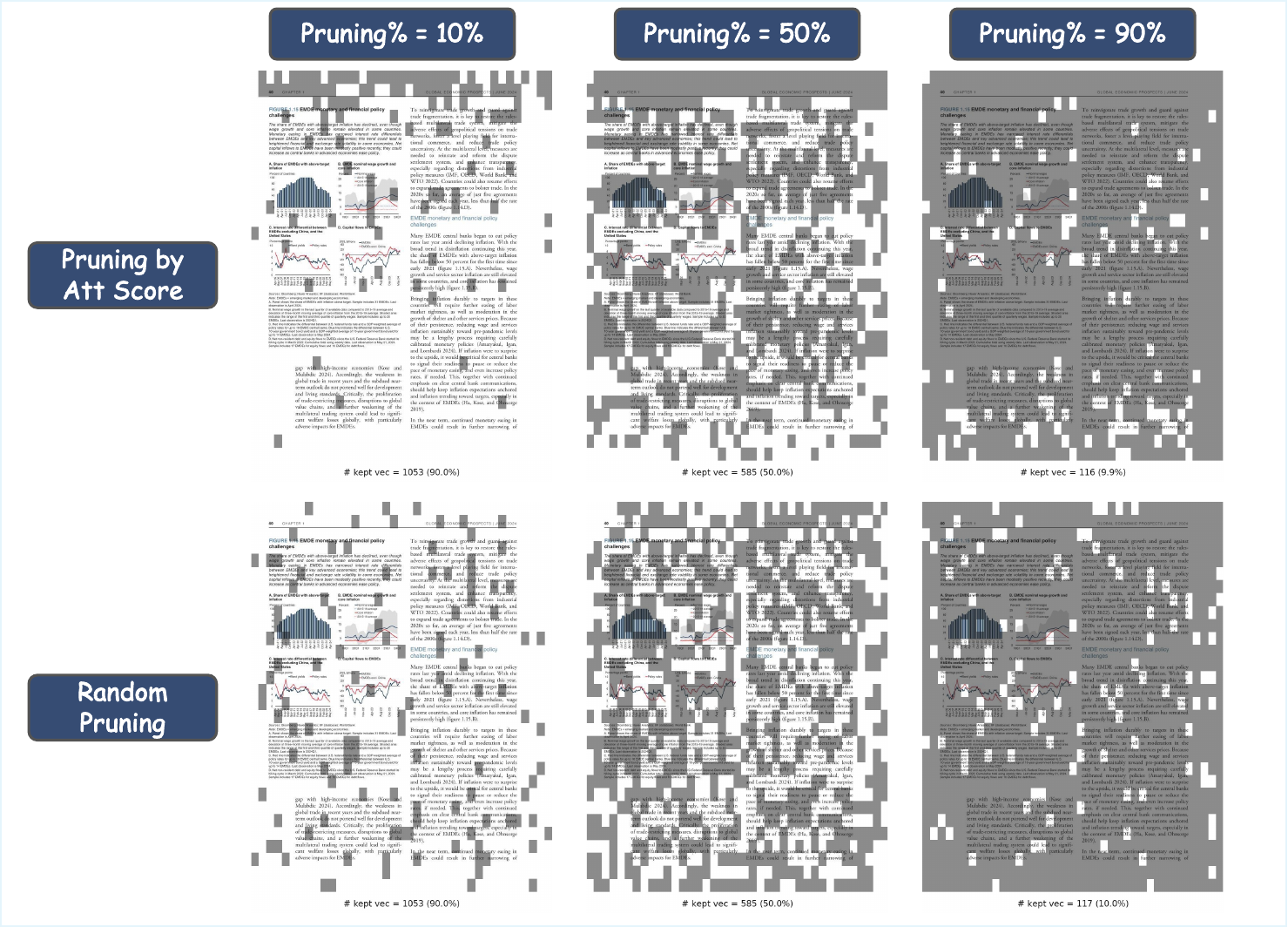}
  \caption{Illustrative example of pruned visual documents by different pruning ratios.}
\label{fig:example_prune1}
\end{figure*}

\section{More Related Work}
\label{app:more_related_work}

\subsection{Multi-Vector Retrieval}
\label{app:related_work_multi_vector_retrieval}

The concept of multi-vector retrieval, also known as late-interaction, was first popularized in the \textbf{text retrieval} domain by ColBERT \cite{khattab2020colbert}. 
This approach deviates from single-vector methods by representing each document as a "bag" of contextualized token embeddings. 
Instead of pre-aggregating information into a single dense vector, it computes relevance scores at a fine-grained level using the \texttt{MaxSim} operator (as shown in Eq. \ref{eq:maxsim}), enabling more nuanced term-level matching \cite{clavie2024reducing,clavie2025simple,jha2024jina,bge_m3,xie2025investigating,li2023slim}. 
However, this paradigm introduces prohibitive storage and computational overhead, which has spurred a significant stream of research focused on optimization. These efforts can be broadly categorized into three directions.
The first focuses on \textit{indexing and search acceleration}. 
ColBERTv2 \cite{santhanam2021colbertv2} introduced centroid-based compression, where each embedding $\mathbf{d}_j$ is approximated by its nearest centroid $\mathbf{c}_k$ and a compressed residual $\mathbf{r}_j$, i.e., $\mathbf{d}_j \approx \mathbf{c}_k + \mathbf{r}_j$. 
Building on this, PLAID \cite{santhanam2022plaid} accelerated retrieval by using these centroids for efficient pruning of irrelevant documents.
Other works have explored reducing the interaction space itself. For instance, CITADEL \cite{li2022citadel} introduced dynamic lexical routing to constrain token interactions to only those sharing a predicted "lexical key," while MUVERA \cite{jayaram2024muvera} proposed converting multi-vector sets into single Fixed Dimensional Encodings (FDEs) whose inner product approximates the original multi-vector similarity, thereby enabling the use of standard MIPS solvers.
A second line of work aims to \textit{enhance the scoring function's expressiveness}. For example, TRIAL \cite{kang2025trial} explicitly models token relations and incorporates learned importance weights for each query token to improve accuracy.
Finally, another major direction, which our work builds upon, is \textit{reducing the number of stored vectors per document via offline token pruning or merging strategies} \cite{clavie2024reducing,veneroso2025crisp,macavaney2025efficient}, directly tackling the storage bottleneck.

This paradigm was naturally extended to the \textbf{multimodal domain} to handle the complexities of visually-rich documents, where fine-grained matching is critical \cite{kim2025hybrid,kolavi2025m3dr,liu2025rematch,clavie2025lir}. 
ColPali \cite{faysse2024colpali} pioneered this direction by adapting the ColBERT framework to VDR, using a VLM to generate multi-vector patch embeddings directly from document images. 
This allows the system to match query token embeddings against specific image patch embeddings, preserving spatial and semantic details lost in page-level models. 
This breakthrough spurred a new wave of VDR models. 
For example, recent works like ColNomic \cite{nomicembedmultimodal2025} have enhanced performance by training on vast corpora of interleaved text and image data, while Jina-v4 \cite{gunther2025jina} proposed a unified architecture capable of producing both multi-vector and single-vector outputs for greater flexibility.
Besides, Llama Nemoretriever Colembed \cite{xu2025llama} advanced further by replacing the standard causal attention in VLMs with bidirectional attention to better suit retrieval tasks. 
Similarly, ModernVBERT \cite{teiletche2025modernvbert} demonstrated that a smaller, purpose-built bidirectional encoder can achieve competitive performance, highlighting the synergy between encoder architectures and late interaction.
Other works have focused on refining the training process and interaction mechanism, such as ColMate \cite{masry2025colmate}, which introduced a masked OCR pretraining objective and a \texttt{TopKSim} operator to better handle noisy patch-based tokenization.
From an efficiency standpoint, ColFlor \cite{masry2024colflor} developed a significantly smaller 174M parameter model that maintains competitive performance on text-rich documents. More recently, the multi-vector paradigm has been extended to omni-modal settings, with works like CLaMR \cite{wan2025clamr} and Omni-Embed-Nemotron \cite{xu2025omni} developing unified models for joint retrieval across text, image, audio, and video, signaling a promising direction for future research.


\subsection{Multimodal Retrieval Models}
\label{app:related_work_multimodal_retrieval_models}
Universal multimodal retrieval models aim to create a unified embedding space for various data types, most commonly text, images, and videos, in addition to the visually-rich documents central to this paper \cite{zhao2023retrieving,mei2025survey}. 
Each modality presents distinct challenges. 
For instance, image-text retrieval often focuses on matching textual descriptions to static visual scenes, capturing objects, attributes, and their relationships \cite{cao2022image,ying2022survey}. 
Video retrieval introduces a temporal dimension, requiring the model to understand actions, events, and narrative progression over time \cite{zhu2023deep}. 
In contrast, VDR is particularly complex as it must jointly comprehend dense, structured textual information and complex spatial layouts, such as tables, figures, and forms, which are often lost in standard image or text encoders  \cite{faysse2024colpali,chen2025cmrag,sun2025unveil}.

The foundation of modern multimodal retrieval was laid by dual-encoder models like CLIP \cite{radford2021learning}, which pioneered learning aligned representations through large-scale contrastive training on image-text pairs. 
Building on this, recent works have increasingly leveraged the power of LVLMs to create more universal embeddings \cite{zhu2025freeret,liu2025any,ma2024unifying,xiao2025scaling}. 
For instance, E5-V \cite{jiang2024e5} employs unimodal contrastive learning on the language component of a LVLM to bridge the modality gap. 
VLM2Vec series \cite{jiang2024vlm2vec,meng2025vlm2vec} further explores this direction by introducing a comprehensive framework and benchmark (MMEB) to repurpose pre-trained VLMs into powerful embedding models. 
Other methods focus on refining the training dynamics; for example, UniME-V1 \cite{gu2025breaking} and QQMM \cite{xue2025improve} introduce advanced hard negative mining strategies and gradient amplification techniques to enhance the discriminative power of the learned embeddings. 
Furthermore, UniME-V2 \cite{gu2025unime} leverages an MLLM-as-a-Judge for sophisticated hard negative mining, MM-Embed \cite{lin2024mm} introduces modality-aware hard negative mining to mitigate modality bias, RzenEmbed \cite{jian2025rzenembed} introduces a hardness-weighted contrastive loss, B3 \cite{thirukovalluru2025breaking} proposes smart batch mining to construct more effective training batches, and LamRA \cite{liu2025lamra} adopts a progressive two-stage training strategy.
Some efforts are data-centric; for example, MegaPairs \cite{zhou2025megapairs} and GME \cite{zhang2024gme} introduce pipelines to synthesize massive, high-quality fused-modal training data and optimize the training data composition to enhance model robustness.
Concurrently, other approaches like ReMatch \cite{liu2025rematch}, CAFe \cite{yu2025cafe}, and VladVA \cite{ouali2025vladva} unify retrieval with generative objectives by using the MLLM to predict relevance or by employing a joint contrastive-autoregressive loss.
More recently, a new trend leverages the explicit reasoning capabilities of MLLMs, with works like Think-Then-Embed \cite{cui2025tte}, UME-R1 \cite{lan2025ume}, and Reasoning Guided Embeddings (RGE) \cite{liu2025reasoning} incorporating intermediate rationale generation to produce more informative representations.

\subsection{Large Vision-Language Models}
\label{app:related_work_lvlm}

In recent years, Large Vision-Language Models (LVLMs) have demonstrated rapid advancements in general-purpose capabilities, significantly benefiting a wide array of downstream domains such as coding \cite{lin2025autop2c,wang2025code}, scientific reasoning \cite{yan2024survey,yan2024errorradar,yan2025position,yang2025r1}, education \cite{dai2025physicsarena,su2025essayjudge,su2025cafes}, urban computing \cite{zhu2025cityverse,li2025urban,yan2024urbanclip,yan2024georeasoner}, trustworthy system \cite{huo2025mmunlearner,huo2024mmneuron,huang2025pierce,liu2025vlamark}, and even agentic system \cite{liu2026lmagent,long2025seeing,yan2025mathagent}. 
This progress is exemplified by a series of state-of-the-art open-source models, each introducing unique technical innovations. 
For instance, InternVL3~\cite{zhu2025internvl3} pioneers a native multimodal pre-training paradigm that jointly learns visual and linguistic capabilities in a single stage, fundamentally improving cross-modal alignment. 
To enhance efficiency and specialization, DeepSeek-VL2~\cite{wu2024deepseekvl2} integrates a Mixture-of-Experts (MoE) architecture with a dynamic tiling strategy for high-resolution vision. 
Architectural flexibility is further explored by Qwen2-VL~\cite{wang2024qwenvl2}, which introduces a Naive Dynamic Resolution mechanism and Multimodal Rotary Position Embedding (M-RoPE) to handle varied image sizes, and Gemma 3~\cite{Kamath2025Gemma3}, which achieves efficient long-context understanding by strategically increasing the ratio of local to global attention layers. 
Building on these foundations, Qwen2.5-VL~\cite{bai2025qwenvl25} focuses on fine-grained perception through dynamic resolution and absolute time encoding for precise long-video analysis.
To boost reasoning, InternVL3.5~\cite{wang2025internvl35} proposes a Cascade Reinforcement Learning (Cascade RL) framework combining offline and online stages for more stable and refined alignment.
Complementing these advancements, Kimi K2.5~\cite{team2026kimi25} introduces the Agent Swarm, a parallel agent orchestration framework that decomposes complex tasks for concurrent execution.
The sophisticated features developed in these models, such as enhanced fine-grained perception, superior cross-modal alignment, efficient long-context processing, and advanced reasoning, are poised to significantly benefit the field of multimodal retrieval. 
By leveraging these capabilities, retrieval systems can achieve more robust and accurate results when searching and ranking complex image-text data.

\section{Algorithm Workflow}
\label{app:algo_workflow}
We formalize the complete workflow of our proposed \method framework in two distinct algorithms. 
\textbf{Algorithm~\ref{alg:prune_then_merge_compression}} details the offline compression phase, where \method{} generates a highly compact set of document embeddings through its sequential two-stage process.
Subsequently, \textbf{Algorithm~\ref{alg:prune_then_merge_scoring}} illustrates the online retrieval phase, where the final relevance score is efficiently computed via a \texttt{MaxSim} operation using this compressed set of embeddings.

\begin{algorithm*}[!h]
\caption{The \method{} Compression (Offline Indexing Phase)}
\label{alg:prune_then_merge_compression}
\DontPrintSemicolon
\SetAlgoLined

\KwIn{A document page $d$;\\
      A VLM encoder $\Phi(\cdot)$ outputting patch embeddings and attention weights;\\
      A pruning sensitivity controller hyperparameter $k$; \\
      A merging factor hyperparameter $m$.}
\KwOut{A compressed set of patch embeddings $\mathbf{D}''$.}

\BlankLine

\tcc{\textcolor{blue}{--- STAGE 1: ADAPTIVE PRUNING ---}}
\tcc{\textcolor{purple}{Step 1.1: VLM Forward Pass to get Embeddings and Attention}}
$\{\mathbf{D}, \mathbf{A}\} \leftarrow \Phi(d)$ \tcp*{Extract initial embeddings $\mathbf{D} = \{\mathbf{d}_j\}_{j=1}^{N_p}$ and attention $\mathbf{A}$}

\BlankLine
\tcc{\textcolor{purple}{Step 1.2: Quantify Patch Importance via Global Token Attention}}
Let $g$ be the index of the global token (e.g., \texttt{[EOS]})\;
Initialize an empty list of importance scores $\mathcal{I}_d$\;
\For{$j \leftarrow 1$ \KwTo $N_p$}{
    $\bar{\mathbf{A}}_{g, j} \leftarrow \frac{1}{H} \sum_{h=1}^{H} \mathbf{A}_{h, g, j}$ \tcp*{Average attention over $H$ heads}
    $I(\mathbf{d}_j) \leftarrow \bar{\mathbf{A}}_{g, j}$\;
    Append $I(\mathbf{d}_j)$ to $\mathcal{I}_d$\;
}

\BlankLine
\tcc{\textcolor{purple}{Step 1.3: Compute Adaptive Threshold and Prune}}
$\mu_d \leftarrow \text{mean}(\mathcal{I}_d)$; \quad $\sigma_d \leftarrow \text{std\_dev}(\mathcal{I}_d)$\;
$\tau_d \leftarrow \mu_d + k \cdot \sigma_d$ \tcp*{Document-specific pruning threshold}
$\hat{\mathbf{D}}' \leftarrow \{\mathbf{d}_j \in \mathbf{D} \mid I(\mathbf{d}_j) > \tau_d\}$ \tcp*{Preliminary pruned set}

\BlankLine
\tcc{\textcolor{purple}{Step 1.4: Finalize Pruned Set with Robustness Guarantee}}
\eIf{$\hat{\mathbf{D}}' = \emptyset$}{
    $j^* \leftarrow \underset{j}{\arg\max} \, I(\mathbf{d}_j)$\;
    $\mathbf{D}' \leftarrow \{\mathbf{d}_{j^*}\}$ \tcp*{Keep single most important patch}
}{
    $\mathbf{D}' \leftarrow \hat{\mathbf{D}}'$
}
$N_p' \leftarrow |\mathbf{D}'|$ \tcp*{Number of patches after pruning}

\BlankLine

\tcc{\textcolor{blue}{--- STAGE 2: HIERARCHICAL MERGING ---}}
\tcc{\textcolor{purple}{Step 2.1: Check if Merging is Applicable}}
\uIf{$N_p' < m$ \textbf{or} $m \le 1$}{
    $\mathbf{D}'' \leftarrow \mathbf{D}'$ \tcp*{Skip merging if set is already small}
}
\Else{
    \tcc{\textcolor{purple}{Step 2.2: Determine Target Number of Clusters}}
    $N_p'' \leftarrow \max(1, \lfloor N_p' / m \rfloor)$\;
    
    \BlankLine
    \tcc{\textcolor{purple}{Step 2.3: Perform Hierarchical Clustering}}
    \tcp{Normalize embeddings, compute distance matrix, apply linkage}
    $\text{labels} \leftarrow \text{HierarchicalClustering}(\mathbf{D}', \text{num\_clusters}=N_p'')$\;
    
    \BlankLine
    \tcc{\textcolor{purple}{Step 2.4: Compute Cluster Centroids}}
    $\mathbf{D}'' \leftarrow \{\}$\;
    \For{$c \leftarrow 1$ \KwTo $N_p''$}{
        $C_c \leftarrow \{\mathbf{d}_j \in \mathbf{D}' \mid \text{label}(\mathbf{d}_j) = c\}$\;
        $\mathbf{d}''_c \leftarrow \frac{1}{|C_c|} \sum_{\mathbf{d}_j \in C_c} \mathbf{d}_j$ \tcp*{Compute centroid}
        $\mathbf{D}'' \leftarrow \mathbf{D}'' \cup \{\mathbf{d}''_c\}$\;
    }
}
\Return $\mathbf{D}''$\;
\end{algorithm*}

\begin{algorithm*}[!h]
\caption{Scoring with Pruned-and-Merged Embeddings (Online Retrieval Phase)}
\label{alg:prune_then_merge_scoring}
\DontPrintSemicolon
\SetAlgoLined

\KwIn{A textual query $q$;\\
      The compressed document embedding set $\mathbf{D}''$ (from Alg. \ref{alg:prune_then_merge_compression});\\
      A VLM encoder $\Phi(\cdot)$ for query encoding.}
\KwOut{The final relevance score $s''(q, d)$.}

\BlankLine

\tcc{\textcolor{blue}{Step 1: Encode Query}}
$\mathbf{Q} \leftarrow \Phi(q)$ \tcp*{Encode $q$ into token embeddings $\mathbf{Q} = \{\mathbf{q}_i\}$}

\BlankLine

\tcc{\textcolor{blue}{Step 2: Compute Relevance Score with Compressed Embeddings}}
$s''(q, d) \leftarrow 0$\;
\For{$\mathbf{q}_i \in \mathbf{Q}$}{
    max\_sim $\leftarrow -\infty$\;
    \For{$\mathbf{d}''_c \in \mathbf{D}''$}{
        sim $\leftarrow \mathbf{q}_i^\top \mathbf{d}''_c$\;
        \If{sim $>$ max\_sim}{
            max\_sim $\leftarrow$ sim\;
        }
    }
    $s''(q, d) \leftarrow s''(q, d) +$ max\_sim \tcp*{Aggregate max similarity per query token}
}

\BlankLine
\Return $s''(q, d)$\;
\end{algorithm*}

\clearpage
\section{More Theoretical Analysis}
\label{app:more_theory}

This section provides a more detailed theoretical underpinning for the \method{} framework, expanding upon the analysis presented in Section \ref{sec:theoretical_guarantee}. We begin by formalizing the core principles of information theory that motivate our approach, then present theorems and corollaries that justify the efficacy of our two-stage decomposition.

\subsection{Theoretical Foundations: IB and RD Theory}

\paragraph{Information Bottleneck (IB) Principle.}
The IB principle~\cite{tishby2000information,tishby2015deep,gordon2003applying} formalizes the trade-off between compression and prediction. Given a source random variable $\mathbf{X}$ and a target variable $\mathbf{Y}$, the goal is to find a compressed representation $\mathbf{Z}$ of $\mathbf{X}$ that retains the maximum possible information about $\mathbf{Y}$. This is expressed as minimizing the Lagrangian:
\begin{equation}
    \mathcal{L}_{\text{IB}}(\mathbf{Z}) = I(\mathbf{Z}; \mathbf{X}) - \beta I(\mathbf{Z}; \mathbf{Y})
\end{equation}
where $I(\cdot;\cdot)$ is the mutual information and $\beta > 0$ is a Lagrange multiplier. In our context, $\mathbf{X}$ is the full patch set $\mathbf{D}$, $\mathbf{Z}$ is the compressed set $\mathbf{D}''$, and $\mathbf{Y}$ is the relevance variable, dependent on an unknown query $q$.

\paragraph{Rate-Distortion (RD) Theory.}
RD theory~\cite{ortega1998rate,yeung2008rate} addresses lossy data compression. Given a source $\mathbf{X}$ and a distortion measure $d(\mathbf{x}, \hat{\mathbf{x}})$, the rate-distortion function $R(\Delta)$ is the minimum rate required to represent the source such that the expected distortion does not exceed $\Delta$:
\begin{equation}
    R(\Delta) = \min_{p(\hat{\mathbf{x}}|\mathbf{x}) : \mathbb{E}[d(\mathbf{X}, \hat{\mathbf{X}})] \le \Delta} I(\mathbf{X}; \hat{\mathbf{X}})
\end{equation}
Vector Quantization (VQ) is a practical approach to solving the RD problem for continuous variables, where the goal is to find a finite "codebook" (our $\mathbf{D}''$) that best represents the original data distribution (our $\mathbf{D}'$).

\subsection{Theoretical Decomposition of the Framework}
Our framework's strength lies in decomposing the complex IB problem into two sequential, manageable sub-problems, justified by the following theorems.

\begin{theorem}
\label{thm:pruning}
\textbf{(Information-Preserving Noise Filtering)}
Let the full patch set $\mathbf{D}$ be a disjoint union of a signal set $\mathbf{D}_{\text{sig}}$ and a noise set $\mathbf{D}_{\text{noi}}$. Let the importance score $I(\mathbf{d}_j)$ be a proxy for the information a patch $\mathbf{d}_j$ provides about the document's global semantics $\mathbf{h}_{\emph{eos}}$. If the noise set satisfies $\forall \mathbf{d}_j \in \mathbf{D}_{\emph{noi}}$, $I(\mathbf{d}_j; \mathbf{h}_{\emph{eos}}) < \epsilon$ for some small $\epsilon > 0$, then the pruned set $\mathbf{D}'$ generated by our adaptive thresholding mechanism retains almost all the information of the original set $\mathbf{D}$ with respect to $\mathbf{h}_{\emph{eos}}$. Formally:
\begin{equation}
\begin{split}
    I(\mathbf{D}; \mathbf{h}_{\emph{eos}}) - I(\mathbf{D}'; \mathbf{h}_{\emph{eos}}) & \le \sum_{\mathbf{d}_j \in \mathbf{D} \setminus \mathbf{D}'} I(\mathbf{d}_j; \mathbf{h}_{\emph{eos}} | \mathbf{D}') \\
    & \approx 0
\end{split}
\end{equation}
\end{theorem}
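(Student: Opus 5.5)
The plan is to prove the inequality with the chain rule for mutual information, and then argue the "$\approx 0$" part from the noise assumption together with the thresholding mechanism. Write $\mathbf{D}\setminus\mathbf{D}' = \{\mathbf{d}_{j_1},\dots,\mathbf{d}_{j_r}\}$ in any fixed order. Since $\mathbf{D}' \subset \mathbf{D}$, we have $\mathbf{D} = (\mathbf{D}', \mathbf{d}_{j_1},\dots,\mathbf{d}_{j_r})$ as random variables. The chain rule then gives
\begin{equation*}
I(\mathbf{D};\mathbf{h}_{\text{eos}}) - I(\mathbf{D}';\mathbf{h}_{\text{eos}}) = I(\mathbf{D}\setminus\mathbf{D}';\mathbf{h}_{\text{eos}}\mid \mathbf{D}') = \sum_{t=1}^{r} I(\mathbf{d}_{j_t};\mathbf{h}_{\text{eos}}\mid \mathbf{D}',\mathbf{d}_{j_1},\dots,\mathbf{d}_{j_{t-1}}).
\end{equation*}
This is an exact identity, so in particular the left side is nonnegative.

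To reach the stated bound, each summand's conditioning on the earlier removed patches has to be dropped, i.e.\ we need $I(\mathbf{d}_{j_t};\mathbf{h}_{\text{eos}}\mid \mathbf{D}',\mathbf{d}_{j_{<t}}) \le I(\mathbf{d}_{j_t};\mathbf{h}_{\text{eos}}\mid \mathbf{D}')$. This fails for arbitrary joint distributions, because conditioning can increase mutual information through synergy. I would therefore add an explicit structural assumption that is implicit in the statement: given $\mathbf{D}'$, the pruned patches are conditionally independent of one another, and they remain so when $\mathbf{h}_{\text{eos}}$ is also conditioned on. Under this assumption the summand becomes $H(\mathbf{d}_{j_t}\mid\mathbf{D}',\mathbf{d}_{j_{<t}}) - H(\mathbf{d}_{j_t}\mid\mathbf{h}_{\text{eos}},\mathbf{D}',\mathbf{d}_{j_{<t}})$, which reduces to $H(\mathbf{d}_{j_t}\mid\mathbf{D}') - H(\mathbf{d}_{j_t}\mid\mathbf{h}_{\text{eos}},\mathbf{D}')$, giving the bound with equality. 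A weaker alternative, which already suffices for the inequality, is a no-synergy (submodularity) assumption on $S \mapsto I(S;\mathbf{h}_{\text{eos}}\mid\mathbf{D}')$. This is the main obstacle: the inequality is not a theorem of information theory without such an assumption, and I expect to state it as a hypothesis rather than derive it.

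For the approximation step, I would use that the adaptive threshold $\tau_d = \mu_d + k\sigma_d$ removes exactly the patches with low importance $I(\mathbf{d}_j)$. The statement treats this score as a proxy for $I(\mathbf{d}_j;\mathbf{h}_{\text{eos}})$, with $I(\mathbf{d}_j) \propto I(\mathbf{d}_j;\mathbf{h}_{\text{eos}})$ as in Section~\ref{sec:theoretical_guarantee}. So I assume that, up to ranking errors, $\mathbf{D}\setminus\mathbf{D}' \subseteq \mathbf{D}_{\text{noi}}$, meaning every pruned patch satisfies $I(\mathbf{d}_j;\mathbf{h}_{\text{eos}})<\epsilon$.

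Next I would pass from unconditional to conditional information: $I(\mathbf{d}_j;\mathbf{h}_{\text{eos}}\mid\mathbf{D}') \le I(\mathbf{d}_j;\mathbf{h}_{\text{eos}})$. This holds when $\mathbf{d}_j$ and $\mathbf{h}_{\text{eos}}$ interact with $\mathbf{D}'$ only redundantly, the same no-synergy condition as above. The sum is then bounded by $|\mathbf{D}\setminus\mathbf{D}'|\,\epsilon \le N_p\epsilon$. This quantity is $\approx 0$ when $\epsilon \ll 1/N_p$, and in that regime the claim follows.
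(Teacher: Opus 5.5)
Your proof is correct under the hypotheses you add, and it follows the paper's skeleton: the chain rule $I(\mathbf{D};\mathbf{h}_{\text{eos}}) = I(\mathbf{D}';\mathbf{h}_{\text{eos}}) + I(\mathbf{D}\setminus\mathbf{D}';\mathbf{h}_{\text{eos}}\mid\mathbf{D}')$, then ``each discarded patch contributes negligibly,'' then sum. Where you differ is in the middle steps, and there you are more sound than the paper. The paper asserts that the discarded patches are conditionally independent of $\mathbf{h}_{\text{eos}}$ given $\mathbf{D}'$. It then passes from the joint conditional term to the per-patch sum without comment. That assumption leaves the $\epsilon$-hypothesis idle: read jointly, it makes the left side exactly $0$ by the chain rule alone. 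The passage to the sum is exactly the step you flag as false in general. Take two discarded independent fair bits with $\mathbf{h}_{\text{eos}} = \mathbf{d}_1 \oplus \mathbf{d}_2$, and $\mathbf{D}'$ independent of both. They satisfy the hypothesis, and even the paper's per-patch independence, yet give a left side of one bit against a sum of zero. Your version names the no-synergy conditions actually needed and uses $\epsilon$ quantitatively to get the explicit bound $|\mathbf{D}\setminus\mathbf{D}'|\,\epsilon$. You also correctly note that ``$\approx 0$'' then needs $\epsilon \ll 1/N_p$, which the paper's ``summing negligible terms'' ignores.

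Two sharpenings. First, for the inequality alone you only need the discarded patches to be mutually independent given $(\mathbf{D}',\mathbf{h}_{\text{eos}})$. Then $H(\mathbf{d}_{j_t}\mid\mathbf{h}_{\text{eos}},\mathbf{D}',\mathbf{d}_{j_{<t}}) = H(\mathbf{d}_{j_t}\mid\mathbf{h}_{\text{eos}},\mathbf{D}')$, while $H(\mathbf{d}_{j_t}\mid\mathbf{D}',\mathbf{d}_{j_{<t}}) \le H(\mathbf{d}_{j_t}\mid\mathbf{D}')$, so each summand is at most $I(\mathbf{d}_{j_t};\mathbf{h}_{\text{eos}}\mid\mathbf{D}')$. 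Independence given $\mathbf{D}'$ alone pushes the other way, as in the XOR case. Second, both of your no-synergy conditions follow from one hypothesis: all patches are mutually conditionally independent given $\mathbf{h}_{\text{eos}}$. This gives the summand bound above. It also gives $\mathbf{d}_j \perp \mathbf{D}' \mid \mathbf{h}_{\text{eos}}$, and hence $I(\mathbf{d}_j;\mathbf{h}_{\text{eos}}\mid\mathbf{D}') = I(\mathbf{d}_j;\mathbf{h}_{\text{eos}}) - I(\mathbf{d}_j;\mathbf{D}') \le I(\mathbf{d}_j;\mathbf{h}_{\text{eos}})$.
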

\begin{proof}
By the chain rule for mutual information, $I(\mathbf{D}; \mathbf{h}_{\text{eos}}) = I(\mathbf{D}'; \mathbf{h}_{\text{eos}}) + I(\mathbf{D} \setminus \mathbf{D}'; \mathbf{h}_{\text{eos}} | \mathbf{D}')$. Our pruning stage $g_p$ is designed to discard patches $\mathbf{d}_j$ where their proxy score is low, implying $I(\mathbf{d}_j; \mathbf{h}_{\text{eos}})$ is small. Since these patches are conditionally independent of $\mathbf{h}_{\text{eos}}$ given the high-information set $\mathbf{D}'$, their conditional mutual information term $I(\mathbf{d}_j; \mathbf{h}_{\text{eos}} | \mathbf{D}')$ is also negligible. Summing these negligible terms leads to the result.
\end{proof}

\begin{theorem}
\label{thm:merging}
\textbf{(Optimal Redundancy Reduction via Quantization)}
For a given set of $N_p'$ vectors $\mathbf{D}'$ and a target codebook size of $N_p''$, the set of centroids $\mathbf{D}''$ obtained by partitioning $\mathbf{D}'$ into $N_p''$ clusters $\{C_c\}$ and computing the mean for each cluster is the optimal solution to the vector quantization problem that minimizes the Mean Squared Error (MSE) distortion.
\begin{equation}
\begin{split}
    \mathbf{D}'' = \underset{\hat{\mathbf{D}}'' : |\hat{\mathbf{D}}''|=N_p''}{\arg\min} & \sum_{c=1}^{N_p''} \sum_{\mathbf{d}_j \in C_c} ||\mathbf{d}_j - \hat{\mathbf{d}}''_c||^2_2
\end{split}
\end{equation}
\end{theorem}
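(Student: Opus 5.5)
The plan is to read Theorem~\ref{thm:merging} in the form in which it is actually used in the merging stage. The partition $\{C_c\}_{c=1}^{N_p''}$ of $\mathbf{D}'$ is the one returned by the hierarchical clustering and is treated as fixed. The optimization runs over codebooks $\hat{\mathbf{D}}'' = \{\hat{\mathbf{d}}''_c\}_{c=1}^{N_p''}$ whose $c$-th codeword is assigned to cluster $C_c$. This is the right reading because the objective in the statement is indexed by the given clusters. I would state explicitly that the theorem does not claim global optimality over all partitions: that is the $k$-means problem, which is NP-hard and which hierarchical clustering only approximates. The claim to prove is therefore the centroid (``Lloyd'') optimality condition. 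With that setup fixed, the first step is to observe that the objective
\begin{equation*}
J(\hat{\mathbf{d}}''_1,\dots,\hat{\mathbf{d}}''_{N_p''}) = \sum_{c=1}^{N_p''} \sum_{\mathbf{d}_j \in C_c} \|\mathbf{d}_j - \hat{\mathbf{d}}''_c\|_2^2
\end{equation*}
is separable: the $c$-th summand depends only on $\hat{\mathbf{d}}''_c$. Minimizing $J$ therefore reduces to minimizing each $J_c(\mathbf{v}) = \sum_{\mathbf{d}_j \in C_c} \|\mathbf{d}_j - \mathbf{v}\|_2^2$ over $\mathbf{v} \in \mathbb{R}^D$ independently.

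For each cluster I would use the bias--variance (parallel axis) identity rather than calculus, since it yields both the minimizer and its uniqueness at once. Let $\bar{\mathbf{d}}_c = \frac{1}{|C_c|}\sum_{\mathbf{d}_j \in C_c}\mathbf{d}_j$. Write $\mathbf{d}_j - \mathbf{v} = (\mathbf{d}_j - \bar{\mathbf{d}}_c) + (\bar{\mathbf{d}}_c - \mathbf{v})$ and expand the square. The cross term $2(\bar{\mathbf{d}}_c - \mathbf{v})^\top \sum_{\mathbf{d}_j\in C_c}(\mathbf{d}_j - \bar{\mathbf{d}}_c)$ vanishes by the definition of the mean, which gives
\begin{equation*}
J_c(\mathbf{v}) = J_c(\bar{\mathbf{d}}_c) + |C_c|\,\|\bar{\mathbf{d}}_c - \mathbf{v}\|_2^2 .
\end{equation*}
Every cluster produced by the linkage cut is nonempty, so $|C_c| \ge 1$. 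Hence $J_c$ is minimized exactly at $\mathbf{v} = \bar{\mathbf{d}}_c$, and that minimizer is unique. Summing over $c$ then shows that the mean-centroid codebook $\mathbf{D}''$ of Algorithm~\ref{alg:prune_then_merge_compression} attains $\min J$ and is its unique minimizer for the given partition. As an alternative, one could note that $J_c$ is strictly convex with Hessian $2|C_c| I$ and set its gradient $-2\sum_{j}(\mathbf{d}_j - \mathbf{v})$ to zero.

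The main obstacle is not computational but one of scope. The statement as written places the $\arg\min$ over all $\hat{\mathbf{D}}''$ of size $N_p''$ while keeping the clusters $\{C_c\}$ in the objective, and a reader might take this as global vector-quantization optimality. I would address this with a complementary remark. If the assignment is also free, i.e.\ the objective is $\sum_{\mathbf{d}_j}\min_{c}\|\mathbf{d}_j - \hat{\mathbf{d}}''_c\|_2^2$ as in Section~\ref{sec:theoretical_guarantee}, then the fixed-partition objective $J$ upper-bounds this free-assignment objective. Consequently any global optimum must satisfy both the centroid condition proved above and the nearest-neighbour condition. The theorem thus characterizes a necessary condition for global optimality and gives exact optimality given the hierarchical partition; no stronger claim is made. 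A last minor point: the centroids are computed in the original (unnormalized) space, while normalization is used only to build the partition. This does not affect the argument, because the partition is treated as given.
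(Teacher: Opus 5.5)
Your proof is correct and is essentially the paper's argument: the paper also fixes the partition and solves each within-cluster least-squares problem separately, setting the derivative to zero to obtain the mean, which is the gradient route you list as your alternative. Your parallel-axis identity adds what the paper leaves implicit, namely that the stationary point is the unique global minimizer. You also make explicit that the claim is optimality for the given partition, not global $k$-means optimality.
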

\begin{proof}
This is a standard result from vector quantization theory. For any given cluster $C_c$, the partial derivative of the inner sum with respect to the centroid $\mathbf{d}''_c$ is $2 \sum (\mathbf{d}_j - \mathbf{d}''_c)$. Setting this to zero to find the minimum yields $\mathbf{d}''_c = \frac{1}{|C_c|} \sum_{\mathbf{d}_j \in C_c} \mathbf{d}_j$. Our hierarchical merging stage directly implements this optimal solution.
\end{proof}

\subsection{Analysis of Synergistic Gain}
The sequential application of these two stages yields a synergistic gain that neither stage could achieve alone. This can be formalized by analyzing the distortion with respect to the \textit{true signal} $\mathbf{D}_{\text{sig}}$.

\begin{corollary}[Synergistic Distortion Reduction]
\label{cor:synergy}
Let $\mathbf{D}''_{\emph{ours}} = g_m(g_p(\mathbf{D}))$ be the output of our two-stage framework, and $\mathbf{D}''_{\emph{naive}} = g_m(\mathbf{D})$ be the output of a naive single-stage merge. The expected distortion of our representation with respect to the true signal set $\mathbf{D}_{\emph{sig}}$ is strictly lower than that of the naive approach.
\begin{equation}
\begin{split}
    \mathbb{E}_{\mathbf{d}_s \in \mathbf{D}_{\emph{sig}}} [ d(\mathbf{d}_s, \mathbf{D}''_{\emph{ours}}) ] & < \\ 
    \mathbb{E}_{\mathbf{d}_s \in \mathbf{D}_{\emph{sig}}} [ & d(\mathbf{d}_s, \mathbf{D}''_{\emph{naive}}) ]
\end{split}
\end{equation}
where $d(\mathbf{x}, \mathcal{Y}) = \min_{\mathbf{y} \in \mathcal{Y}} ||\mathbf{x} - \mathbf{y}||^2_2$ is the quantization error.
\end{corollary}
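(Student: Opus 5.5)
The plan is to reduce the corollary to a comparison of two codebooks of the \emph{same} size evaluated on the \emph{same} data set $\mathbf{D}_{\text{sig}}$. As literally stated, the inequality can fail: for instance, $g_m(\mathbf{D})$ uses $\lfloor N_p/m\rfloor$ codewords, which is more than $\lfloor N_p'/m\rfloor$. So I would first make the implicit hypotheses explicit.
\begin{enumerate}
\item[(A1)] Pruning is exact, $g_p(\mathbf{D})=\mathbf{D}'=\mathbf{D}_{\text{sig}}$, in the idealized regime of Theorem~\ref{thm:pruning}.
\item[(A2)] Both outputs have a common codebook size $K$ (equal-rate comparison).
\item[(A3)] $g_m$ returns an MSE-optimal quantizer of its input. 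This is the idealization underlying Theorem~\ref{thm:merging}.
\item[(A4)] Noise is non-degenerate: in the naive optimal partition of $\mathbf{D}$, some cluster $C_c$ contains a noise vector and satisfies $\frac{1}{|C_c|}\sum_{C_c}\mathbf{d}_j \neq \frac{1}{|C_c\cap\mathbf{D}_{\text{sig}}|}\sum_{C_c\cap \mathbf{D}_{\text{sig}}}\mathbf{d}_j$, with $C_c\cap\mathbf{D}_{\text{sig}}\neq\emptyset$.
\end{enumerate}

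The weak inequality is then immediate. Under (A1)--(A3), $\mathbf{D}''_{\text{ours}}$ minimizes $F(\mathcal{Y}) := \mathbb{E}_{\mathbf{d}_s\in\mathbf{D}_{\text{sig}}}[d(\mathbf{d}_s,\mathcal{Y})]$ over all $\mathcal{Y}\subset\mathbb{R}^D$ with $|\mathcal{Y}|=K$. Since $\mathbf{D}''_{\text{naive}}$ is one such $\mathcal{Y}$, we get $F(\mathbf{D}''_{\text{ours}})\le F(\mathbf{D}''_{\text{naive}})$.

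For strictness, it suffices to show that $\mathbf{D}''_{\text{naive}}$ is not a minimizer of $F$. Let $\{C_c\}$ be the naive partition and let $\mathbf{d}''_c$ be its centroids. Assign each signal point to its naive cluster. This assignment is nearest-codeword for the naive codebook, because optimal quantizers satisfy the nearest-neighbour condition. Hence
\[
F(\mathbf{D}''_{\text{naive}})=\frac{1}{|\mathbf{D}_{\text{sig}}|}\sum_c\sum_{\mathbf{d}_j\in C_c\cap\mathbf{D}_{\text{sig}}}\|\mathbf{d}_j-\mathbf{d}''_c\|_2^2 .
\]
Now replace each $\mathbf{d}''_c$ by the signal-only mean $\bar{\mathbf{s}}_c$ of $C_c\cap\mathbf{D}_{\text{sig}}$, keeping the old centroid when this intersection is empty. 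By the bias--variance identity behind Theorem~\ref{thm:merging},
\[
\sum_{C_c\cap\mathbf{D}_{\text{sig}}}\|\mathbf{d}_j-\mathbf{v}\|^2=\sum_{C_c\cap\mathbf{D}_{\text{sig}}}\|\mathbf{d}_j-\bar{\mathbf{s}}_c\|^2+|C_c\cap\mathbf{D}_{\text{sig}}|\,\|\bar{\mathbf{s}}_c-\mathbf{v}\|^2 .
\]
Applying this with $\mathbf{v}=\mathbf{d}''_c$, each cluster's signal distortion drops by $|C_c\cap\mathbf{D}_{\text{sig}}|\,\|\bar{\mathbf{s}}_c-\mathbf{d}''_c\|^2\ge 0$. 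This decrease is strictly positive for the cluster provided by (A4); it is exactly the bias term displayed in Section~\ref{sec:theoretical_guarantee}. Re-assigning points to their nearest new codeword can only lower the distortion further. This yields a size-$K$ codebook with value strictly below $F(\mathbf{D}''_{\text{naive}})$, and therefore $F(\mathbf{D}''_{\text{ours}})<F(\mathbf{D}''_{\text{naive}})$.

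Finally, I would relax (A1) to $\mathbf{D}'\approx\mathbf{D}_{\text{sig}}$. The value $\min_{|\mathcal{Y}|=K}F$ and the distortion of a fixed codebook both depend continuously on the data points. So if $\mathbf{D}'$ differs from $\mathbf{D}_{\text{sig}}$ by a small perturbation, the strict gap obtained above persists.

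The main obstacle is (A3). Ward-linkage hierarchical clustering is not guaranteed to reach the global MSE optimum; Theorem~\ref{thm:merging} only certifies optimal centroids \emph{given} a partition. My first attempt would be to avoid global optimality altogether. The idea is to define $g_m$ as ``hierarchical partition followed by means'' and compare directly: take the naive partition restricted to $\mathbf{D}_{\text{sig}}$, and argue that the pruned-input partition is at least as good on the signal. This is essentially a local, Lloyd-step version of the argument above. If that cannot be made rigorous, I would state the corollary explicitly under the idealized optimal quantizer.
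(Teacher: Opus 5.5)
Your proof is correct under (A1)--(A4), and you are right that such hypotheses cannot be avoided. With a common merging factor the naive merge receives more codewords and can win. For example, take two well-separated signal vectors, each with one nearby noise vector, and $m=2$: the naive merge pairs each signal vector with its noise neighbour and has tiny signal distortion, while the pruned pipeline must collapse both signal vectors into a single centroid. And with $\mathbf{D}_{\text{noi}}=\emptyset$ the two sides coincide, so strictness needs something like (A4).

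The paper's own argument consists of the bias decomposition alone. It writes the naive centroid of a cluster $C_c$ as the signal-only mean plus the noise-induced term $\frac{1}{|C_c|}\sum_{\mathbf{d}_k\in C_c\cap\mathbf{D}_{\text{noi}}}(\mathbf{d}_k-\mathbf{d}''_{c,\text{sig}})$. It then observes that centroids computed on $\mathbf{D}'\approx\mathbf{D}_{\text{sig}}$ are approximately unbiased, and concludes that distortion drops for every signal vector. This tacitly assumes that both pipelines produce matching clusters and the same number of codewords. It never explains why debiased centroids of one partition must beat the partition the pruned run actually produces. Its per-vector conclusion is also stronger than the argument supports.

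Your route uses the same engine for strictness: your per-cluster gain $|C_c\cap\mathbf{D}_{\text{sig}}|\,\|\bar{\mathbf{s}}_c-\mathbf{d}''_c\|_2^2$ is exactly $|C_c\cap\mathbf{D}_{\text{sig}}|$ times the squared norm of the paper's bias term. The difference is that you embed it in a variational comparison. Under (A1)--(A3), $\mathbf{D}''_{\text{ours}}$ minimizes the very functional $F$ in the corollary, so the debiased naive codebook is merely a feasible competitor, and the partition mismatch never has to be controlled. The paper's version buys intuition with no hypotheses on display. Yours buys an actual proof of the averaged statement, at the price of making explicit that it concerns an idealized, globally MSE-optimal $g_m$; Theorem~\ref{thm:merging} only certifies the centroids given a partition.

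Two minor points. First, the Lloyd-step fallback for Ward linkage is unlikely to close. Nothing in Ward's greedy agglomeration forces its partition of $\mathbf{D}_{\text{sig}}$ to beat the restriction of its partition of $\mathbf{D}$, so stating the corollary under (A3) is the right resolution. Second, when relaxing (A1), note that pruning errors are inclusions or exclusions of whole patches, not small pointwise motions. So phrase the continuity for the uniform measures on $\mathbf{D}'$ and $\mathbf{D}_{\text{sig}}$, e.g.\ in total variation, using that all relevant distortions are bounded by $\mathrm{diam}(\mathbf{D})^2$. Write $F_{\mathbf{D}'}$ for the average distortion over $\mathbf{D}'$ and $\mathcal{Y}^\ast$ for an optimal codebook of $\mathbf{D}_{\text{sig}}$. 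The chain $F(g_m(\mathbf{D}'))\le F_{\mathbf{D}'}(g_m(\mathbf{D}'))+\epsilon\le F_{\mathbf{D}'}(\mathcal{Y}^\ast)+\epsilon\le F(\mathcal{Y}^\ast)+2\epsilon$ then preserves your strict gap.
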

\begin{proof}
A centroid of a naive merge, $\mathbf{d}''_c(\mathbf{D})$, is a biased estimator of the true signal's center of mass.
\begin{equation}
\begin{split}
    \mathbf{d}''_{c, \text{naive}} &= \frac{1}{|C_c|} \sum_{\mathbf{d}_j \in C_c} \mathbf{d}_j \\
    &= \mathbf{d}''_{c, \text{sig}} + \underbrace{\frac{1}{|C_c|} \sum_{\mathbf{d}_k \in C_c \cap \mathbf{D}_{\text{noi}}} (\mathbf{d}_k - \mathbf{d}''_{c, \text{sig}})}_\text{Bias Term}
\end{split}
\end{equation}
In our framework, the pruning stage first ensures $\mathbf{D}' \approx \mathbf{D}_{\text{sig}}$. Therefore, the centroids $\mathbf{d}''_c(\mathbf{D}')$ are computed on a nearly noise-free set, making them approximately unbiased estimators. Since our centroids are closer to the true signal distribution, the expected distortion for any signal vector $\mathbf{d}_s$ will be lower.
\end{proof}

Finally, the Data Processing Inequality dictates that for any Markov chain $\mathbf{X} \to \mathbf{Y} \to \mathbf{Z}$, we have $I(\mathbf{X}; \mathbf{Z}) \le I(\mathbf{X}; \mathbf{Y})$. In our framework, the compression forms the Markov chain $\mathbf{D} \to \mathbf{D}' \to \mathbf{D}''$. This implies:
\begin{equation}
    I(\mathbf{D}; \mathbf{h}_{\text{eos}}) \ge I(\mathbf{D}'; \mathbf{h}_{\text{eos}}) \ge I(\mathbf{D}''; \mathbf{h}_{\text{eos}})
\end{equation}
The goal of an effective compression scheme is to make these inequalities as close to equalities as possible. Our \method{} framework achieves this by ensuring the first inequality is nearly an equality (via Theorem~\ref{thm:pruning}) and the second inequality represents a highly efficient rate-distortion trade-off (via Theorem~\ref{thm:merging} on a clean signal).

\clearpage
\section{Benchmark Details}
\label{app:benchmark_details}
This section provides detailed descriptions of the benchmarks evaluated in this paper.

\subsection{ViDoRe-V1 Benchmark}
\label{app:vidore_v1_benchmark}

ViDoRe-V1, the first Visual Document Retrieval Benchmark, is introduced to address a critical gap in existing evaluation suites. While many benchmarks focus on text embedding models, ViDoRe-V1 evaluates the entire document ingestion and retrieval pipeline, emphasizing the ability of a system to understand both textual content and vital visual cues like figures, tables, and layout. It is designed to assess retrievers on their capacity to process visually rich information as a human would, featuring a diverse collection of 10 sub-datasets that span various topics, modalities, and languages.

The benchmark includes three datasets adapted from established Visual Question Answering (VQA) tasks, each subsampled for consistency. \textbf{ArXivQA}\footnote{\url{https://huggingface.co/datasets/vidore/arxivqa_test_subsampled}} is a VQA dataset built from figures in arXiv publications, with questions synthetically generated by GPT-4 Vision \cite{li2024multimodalarxiv}. \textbf{DocVQA}\footnote{\url{https://huggingface.co/datasets/vidore/docvqa_test_subsampled}} is sourced from the DocVQA test set, using images from the UCSF Industry Documents Library with manually annotated questions and answers \cite{mathew_docvqa_2020}. Similarly, \textbf{InfoVQA}\footnote{\url{https://huggingface.co/datasets/vidore/infovqa_test_subsampled}} is derived from the InfoVQA test set and comprises infographics collected from the internet, also featuring manual annotations \cite{mathew_infographicvqa_2021}. Each of these test sets was subsampled to 500 query-document pairs to ensure homogeneity.

Two datasets specifically target retrieval from complex tabular and textual content. \textbf{TabFQuAD} is designed to assess table QA models in French, reflecting realistic industry scenarios. It combines human-annotated queries with additional questions generated by GPT-4V, with a test set of 280 pairs. In contrast, \textbf{TAT-DQA}\footnote{\url{https://huggingface.co/datasets/vidore/tatdqa_test}} is a large-scale dataset from real-world financial reports, focusing on rich tabular content and numerical reasoning, with questions annotated by finance experts \cite{tatqa,zhu2022towards}. Its full test set of 1,663 pairs is retained, as its complexity and domain-specificity closely align with practical, high-stakes retrieval use cases.

To evaluate multilingual performance, ViDoRe-V1 incorporates \textbf{ShiftProject}\footnote{\url{https://huggingface.co/datasets/vidore/shiftproject_test}}, a topic-specific benchmark in French. This dataset contains five large reports (totaling approximately 1,000 pages) from the Shift Project concerning environmental topics. It includes 100 question-answer pairs generated by the Claude-3 Sonnet model, which were then extensively filtered by human annotators to ensure high quality and relevance.

Finally, a suite of four \textbf{SyntheticDocQA} datasets simulates retrieval tasks in realistic industrial applications across diverse domains. Each dataset was constructed by crawling 1,000 PDFs from the internet using a topic-specific query, randomly sampling 1,000 pages, and then generating 100 high-quality question-answer pairs using Claude-3 Sonnet. The topics are designed to benchmark performance on specific document types: \textbf{Artificial Intelligence}\footnote{\url{https://huggingface.co/datasets/vidore/syntheticDocQA_artificial_intelligence_test}}, \textbf{Energy}\footnote{\url{https://huggingface.co/datasets/vidore/syntheticDocQA_energy_test}} (technical documentation), \textbf{Government Reports}\footnote{\url{https://huggingface.co/datasets/vidore/syntheticDocQA_government_reports_test}} (administrative and legal documents), and \textbf{Healthcare Industry}\footnote{\url{https://huggingface.co/datasets/vidore/syntheticDocQA_healthcare_industry_test}} (medical documents).

\subsection{ViDoRe-V2 Benchmark}
\label{app:vidore_v2_benchmark}

ViDoRe-V2 was developed to address the limitations of existing benchmarks and better reflect real-world retrieval challenges. It moves beyond common pitfalls such as the over-reliance on extractive queries, a bias towards single-page contexts, and the quality issues inherent in purely synthetic query generation. To create a more robust and realistic evaluation, ViDoRe-V2 introduces several key innovations. Queries are generated with limited document context (\textit{e.g.,} summaries or metadata) to mimic users who are unfamiliar with the corpus, thus reducing extractive bias. It also emphasizes long-form and cross-document queries. Critically, it employs a hybrid synthetic and human-in-the-loop methodology, where synthetically generated queries undergo extensive human review to ensure high quality and relevance.

The benchmark includes a set of three multilingual, topic-specific datasets created through this semi-synthetic process. Each dataset focuses on a distinct domain and contains queries in English, French, German, and Spanish, which were generated by translating original queries with GPT-4o. The datasets are: \textbf{esg\_reports\_v2}\footnote{\url{https://huggingface.co/datasets/vidore/esg_reports_v2}}, which focuses on ESG reports from the fast-food industry; \textbf{biomedical\_lectures\_v2}\footnote{\url{https://huggingface.co/datasets/vidore/biomedical_lectures_v2}}, centering on MIT biomedical lectures about tissue interactions; and \textbf{economics\_reports\_v2}\footnote{\url{https://huggingface.co/datasets/vidore/economics_reports_v2}}, which covers world economic reports from 2024.

To provide a non-synthetic baseline, ViDoRe-V2 features \textbf{esg\_reports\_human\_labeled\_v2}\footnote{\url{https://huggingface.co/datasets/vidore/esg_reports_human_labeled_v2}}. This dataset was entirely labeled by hand. It shares the same thematic focus on ESG reports in the fast-food industry as its synthetic counterpart but contains a distinct set of queries with no overlap, offering a purely human-curated evaluation scenario.

\subsection{JinaVDR Benchmark}
\label{app:jinavdr_benchmark}

JinaVDR is a comprehensive benchmark designed to evaluate model performance on retrieving visually complex documents. It stands out by encompassing a wide array of multilingual documents with intricate layouts that combine text with charts, tables, and images, sourced from PDFs, scanned materials, and screenshots. The benchmark spans diverse domains, including historical archives, medical records, and legal texts, reflecting real-world professional use cases. JinaVDR was constructed through a multi-faceted approach: repurposing existing VQA and OCR datasets into retrieval tasks, manually annotating high-quality document-query pairs, and employing a hybrid generation process where a Vision-Language Model (Qwen2-VL-7B-Instruct) creates queries for existing document collections, often followed by human verification.

The benchmark features extensive multilingual coverage. A significant portion of this is derived from the Europeana online collection\footnote{\url{https://www.europeana.eu/en}}, which provides scanned historical documents. This includes datasets of Dutch legal texts (\textbf{europeana-nl-legal}\footnote{\url{https://huggingface.co/datasets/jinaai/europeana-nl-legal}}), Italian historical scans (\textbf{europeana-it-scans}\footnote{\url{https://huggingface.co/datasets/jinaai/europeana-it-scans}}), and news articles in German (\textbf{europeana-de-news}\footnote{\url{https://huggingface.co/datasets/jinaai/europeana-de-news}}) and Spanish (\textbf{europeana-es-news}\footnote{\url{https://huggingface.co/datasets/jinaai/europeana-es-news}}). For these datasets, queries were synthetically generated using Qwen2b, with the Dutch dataset also undergoing manual human review to ensure quality.

JinaVDR also includes datasets created by adapting existing resources or curating new ones for specific languages. For example, \textbf{arabic\_infographicsvqa\_ar}\footnote{\url{https://huggingface.co/datasets/jinaai/arabic_infographicsvqa_ar}} reformats an Arabic VQA dataset for retrieval\footnote{\url{https://huggingface.co/datasets/ahmedheakl/arabic_infographicsvqa}}. Other datasets were built by sourcing documents and generating synthetic queries, such as \textbf{beverages\_catalogue\_ru}\footnote{\url{https://huggingface.co/datasets/jinaai/beverages_catalogue_ru}}, which uses Russian beverage catalogs, and \textbf{hindi-gov-vqa}\footnote{\url{https://huggingface.co/datasets/jinaai/hindi-gov-vqa}}, which is based on public government documents in Hindi.

Finally, the benchmark contains datasets with high-quality, domain-specific query-document pairs. This includes \textbf{shanghai\_master\_plan}\footnote{\url{https://huggingface.co/datasets/jinaai/shanghai_master_plan}}, which pairs pages from Shanghai's official master plan with manually written Chinese queries \cite{shanghai_masterplan_2018}. Another example is \textbf{automobile\_catalogue\_jp}\footnote{\url{https://huggingface.co/datasets/jinaai/automobile_catalogue_jp}}, a specialized dataset using Japanese marketing materials from Toyota\footnote{\url{https://toyota.jp/}} to test retrieval in a commercial context.

\subsection{REAL-MM-RAG Benchmark}
\label{real_mm_rag_benchmark}
REAL-MM-RAG is a high-quality, multimodal retrieval benchmark generated through a fully automated pipeline for query creation, filtering, and label verification. A distinctive feature of this benchmark is its multi-level rephrasing framework, which is designed to rigorously evaluate a model's semantic understanding by testing its robustness to various query phrasings beyond simple keyword matching. The benchmark is structured around two key professional domains: finance and technology, each with datasets in both long-form report and presentation slide formats.

The benchmark's four datasets test retrieval across these different domains and formats. In the financial domain, \textbf{REAL-MM-RAG\_FinReport}\footnote{\url{https://huggingface.co/datasets/ibm-research/REAL-MM-RAG_FinReport}} consists of dense, table-heavy financial reports, while \textbf{REAL-MM-RAG\_FinSlides}\footnote{\url{https://huggingface.co/datasets/ibm-research/REAL-MM-RAG_FinSlides}} contains visually structured quarterly financial presentations. In the technology domain, \textbf{REAL-MM-RAG\_TechReport}\footnote{\url{https://huggingface.co/datasets/ibm-research/REAL-MM-RAG_TechReport}} is composed of text-heavy technical documents on IBM FlashSystem, and \textbf{REAL-MM-RAG\_TechSlides}\footnote{\url{https://huggingface.co/datasets/ibm-research/REAL-MM-RAG_TechSlides}} features presentations on IT and business automation that blend text, visuals, and tables. This structure allows for a nuanced evaluation of retrieval performance on both dense textual information and visually complex layouts.

\subsection{ViDoSeek Benchmark}
\label{app:vidoseek_benchmark}

ViDoSeek\footnote{\url{https://huggingface.co/datasets/autumncc/ViDoSeek}} is a novel benchmark specifically designed to evaluate RAG systems on visually rich documents. It addresses a critical limitation in traditional VQA datasets, which are often not suitable for RAG tasks over large-scale corpora. The core design principle of ViDoSeek is that each query has a unique answer and a specific set of reference pages within the entire document collection, enabling a more precise and realistic evaluation of both retrieval and generation performance. The dataset comprises approximately 1,200 questions spanning diverse domains, content types (text, charts, tables, and complex layouts), and reasoning complexities, including both single-hop and multi-hop questions.

The construction of ViDoSeek involved a meticulous, hybrid human-AI pipeline to ensure data quality and relevance. The process began with the collection of documents, primarily slides, which were filtered to include a rich mix of text, tables, charts, and layouts. Human experts then authored specific queries designed to be answerable only by a particular document. A crucial step was the automated quality review, where an LLM first filtered out general queries, and a VLM subsequently verified that the remaining queries had unique answers within the entire corpus by checking against retrieved candidates. Finally, queries that failed this check were refined by a VLM-based agent. The resulting dataset is composed of newly created data and refined queries from the existing SlideVQA dataset \cite{tanaka2023slidevqa}, with a significant portion of questions targeting challenging "Layout" based content.

\subsection{MMLongBench-Doc Benchmark}
\label{app:mmlongbench_doc_benchmark}

MMLongBench-Doc\footnote{\url{https://huggingface.co/datasets/yubo2333/MMLongBench-Doc}} is a benchmark designed to evaluate the long-context, multi-modal document understanding capabilities of LVLMs. It addresses the limitations of previous datasets that primarily focus on single-page documents by providing a challenging testbed of lengthy, information-dense documents. The benchmark consists of 1,082 expert-annotated questions across 135 PDF documents, which average 47.5 pages and over 21,000 tokens each. Questions require evidence from diverse sources, including text, tables, charts, images, and layout structures, and are distributed across various locations within the documents to rigorously test models' localization and comprehension abilities.

The construction of MMLongBench-Doc emphasizes diversity and quality. Documents were sourced from existing datasets and newly collected materials, spanning seven different domains. A key feature of the benchmark is its diverse question types, designed to test distinct capabilities: 45.7\% are single-page questions to assess information localization, a significant 33.7\% are cross-page questions that require multi-page reasoning, and 20.6\% are unanswerable questions designed to measure model hallucination. To ensure high annotation quality, a rigorous, three-stage, semi-automatic quality control process was employed, involving filtering for document relevance, annotator self-reflection guided by model feedback, and peer cross-checking. This meticulous process makes MMLongBench-Doc a robust and challenging benchmark for long-context document understanding.

\clearpage
\section{Baseline Details}
\label{app:baseline_details}
This section details the implementation logic and hyperparameter configurations for the baseline methods evaluated in our experiments. For each method, we conducted an empirical search over the specified hyperparameter range to report the most representative performance trade-offs.

\subsection{Pruning-based Methods}

\paragraph{$\blacktriangleright$ Random}
\begin{itemize}[leftmargin=1.5em, itemsep=0em]
    \item \textit{Implementation Logic:} This naive baseline discards a specified fraction of patch embeddings uniformly at random for each document. It serves as a lower bound to assess the impact of uninformed pruning. To avoid empty sets, at least one patch is always retained.
    \item \textit{Hyperparameters:}
    \begin{itemize}[itemsep=0em]
        \item \texttt{pruning\_ratio}: The fraction of embeddings to discard. Selection Range: \texttt{\{0.1, 0.3, 0.5, 0.7, 0.9\}}.
    \end{itemize}
\end{itemize}

\paragraph{$\blacktriangleright$ Attention-plus-Similarity}
\begin{itemize}[leftmargin=1.5em, itemsep=0em]
    \item \textit{Implementation Logic:} This adaptive baseline calculates a composite score for each patch as a weighted sum of its importance (attention from the \texttt{[EOS]} token) and representativeness (cosine similarity to the \texttt{[EOS]} embedding). It then applies a dynamic threshold ($\mu + k \cdot \sigma$) to these composite scores to prune patches.
    \item \textit{Hyperparameters:}
    \begin{itemize}[itemsep=0em]
        \item Adaptation Factor (\texttt{k}): Modulates the pruning threshold's strictness. Selection Range: \texttt{\{-0.5, -0.25, 0, 0.25, 0.5, 1\}}.
        \item Weighting Factor (\texttt{alpha}): Balances the attention and similarity components. Selection Range: \texttt{\{0.1, 0.3, 0.5, 0.7, 0.9\}}.
    \end{itemize}
\end{itemize}

\paragraph{$\blacktriangleright$ Pivot-Threshold}
\begin{itemize}[leftmargin=1.5em, itemsep=0em]
    \item \textit{Implementation Logic:} This two-stage adaptive method first identifies an ``important set'' of patches using an attention-based threshold ($\mu + k \cdot \sigma$). It then selects \texttt{num\_pivots} from this set and prunes other patches in the set if they are too similar to any pivot, determined by a second adaptive threshold on similarity scores.
    \item \textit{Hyperparameters:}
    \begin{itemize}[itemsep=0em]
        \item Adaptation Factor (\texttt{k}): Controls the initial importance filtering. Selection Range: \texttt{\{-0.5, -0.25, 0, 0.25, 0.5, 1\}}.
        \item De-duplication Factor (\texttt{k\_dup}): Controls the similarity-based pruning. Selection Range: \texttt{\{-0.5, -0.25, 0, 0.25, 0.5, 1\}}.
        \item \texttt{num\_pivots}: The number of pivot tokens for de-duplication. Selection Range: \texttt{\{5, 10, 15, 20\}}.
    \end{itemize}
\end{itemize}

\paragraph{$\blacktriangleright$ DocPruner}
\begin{itemize}[leftmargin=1.5em, itemsep=0em]
    \item \textit{Implementation Logic:} This method leverages the attention scores from a global token (e.g., \texttt{[EOS]}) to all other patch tokens within a document. It computes a document-specific threshold using the mean and standard deviation of these scores ($\mu + k \cdot \sigma$) and discards any patch whose attention score falls below this dynamic threshold.
    \item \textit{Hyperparameters:}
    \begin{itemize}[itemsep=0em]
        \item Adaptation Factor (\texttt{k}): Controls the strictness of the pruning threshold. Selection Range: \texttt{\{-0.5, -0.25, 0, 0.25, 0.5, 1\}}.
    \end{itemize}
\end{itemize}

\subsection{Merging-based Methods}

\paragraph{$\blacktriangleright$ Sem-Cluster}
\begin{itemize}[leftmargin=1.5em, itemsep=0em]
    \item \textit{Implementation Logic:} This method groups embeddings semantically. It performs hierarchical agglomerative clustering on the set of patch embeddings using cosine distance and Ward's linkage. The number of target clusters is determined by the \texttt{merging\_factor}. The final representation consists of the centroids (mean vectors) of each cluster.
    \item \textit{Hyperparameters:}
    \begin{itemize}[itemsep=0em]
        \item \texttt{merging\_factor}: The ratio by which the number of embeddings is reduced. Selection Range: \texttt{\{2, 4, 9, 16, 25\}}.
    \end{itemize}
\end{itemize}

\paragraph{$\blacktriangleright$ 1D-Pooling}
\begin{itemize}[leftmargin=1.5em, itemsep=0em]
    \item \textit{Implementation Logic:} This strategy applies average pooling along the sequence of patch embeddings. It partitions the sequence into non-overlapping windows of size \texttt{merging\_factor} and computes the mean of embeddings within each window.
    \item \textit{Hyperparameters:}
    \begin{itemize}[itemsep=0em]
        \item \texttt{merging\_factor}: The size of the pooling window. Selection Range: \texttt{\{2, 4, 9, 16, 25\}}.
    \end{itemize}
\end{itemize}

\paragraph{$\blacktriangleright$ 2D-Pooling}
\begin{itemize}[leftmargin=1.5em, itemsep=0em]
    \item \textit{Implementation Logic:} This method leverages the spatial layout of patches. It arranges embeddings into a 2D grid and applies 2D average pooling. The \texttt{merging\_factor} must be a perfect square, defining the pooling kernel's area (e.g., a factor of 4 implies a 2x2 kernel).
    \item \textit{Hyperparameters:}
    \begin{itemize}[itemsep=0em]
        \item \texttt{merging\_factor}: The area of the 2D pooling window. Selection Range: \texttt{\{4, 9, 16, 25\}}.
    \end{itemize}
\end{itemize}

\clearpage
\section{More Experimental Result \& Analysis}
\label{app:more_experimental_analysis}

\subsection{Empirical Experimental Results}
\label{app:empirical_exp_results}

We record all the emprical experimental results on all benchamrks from Table \ref{tab:vidore_v1_colqwen} to Table \ref{tab:mmlongbench_jina} for your reference. Due to the space limit, we only maintain two decimal places for all data cells.

\subsection{VDR Performance Comparison}
\label{app:vdr_performance_comparison}
You can refer to Figures \ref{fig:vidore_v2_performance_all}, \ref{fig:vidoseek_performance_all}, and \ref{fig:mmlongbench_performance_all} for the performance comparisons on ViDoRe-V2 \cite{macé2025vidorebenchmarkv2raising}, ViDoSeek \cite{wang2025vidorag}, and MMLongBench-Doc \cite{ma2024mmlongbench}, respectively.

\begin{figure*}[!t]
\centering
\includegraphics[width=\linewidth]{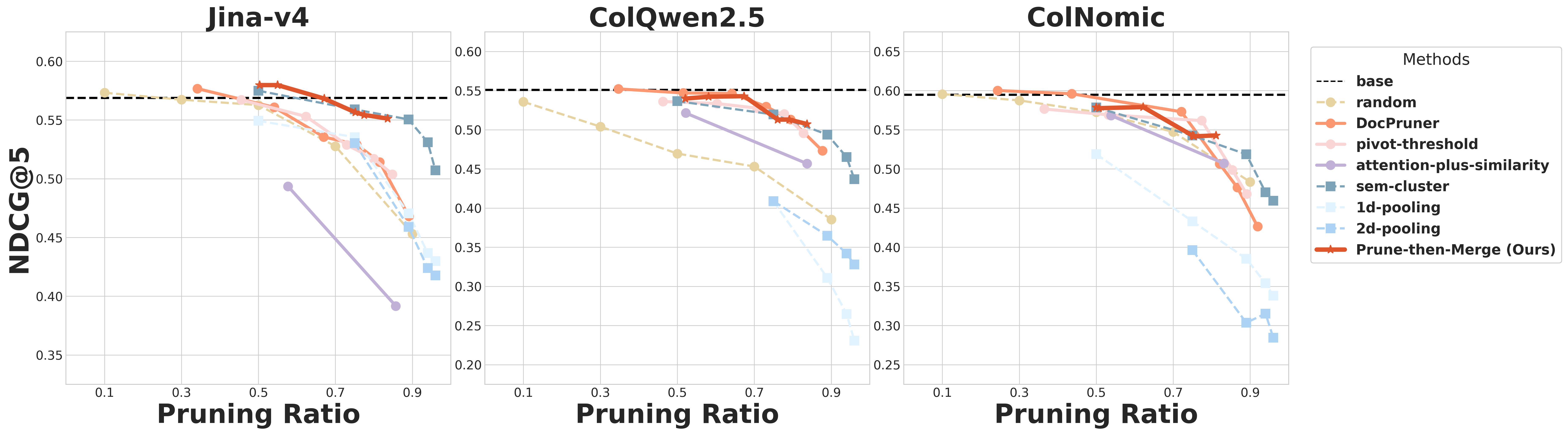}
\vspace{-2em}
\caption{Performance comparison (nDCG@5) between \method and baselines on ViDoRe-V2 \citep{macé2025vidorebenchmarkv2raising} across  Jina-v4 (\textbf{\textit{Left}}), ColQwen2.5 (\textbf{\textit{Middle}}), and ColNomic (\textbf{\textit{Right}}). \textit{solid lines} denote adaptive methods, whereas \textit{dashed lines} denote non-adaptive ones; \textit{circular nodes} represent pruning methods, whereas \textit{square nodes }represent merging ones.}
\label{fig:vidore_v2_performance_all}
\end{figure*}

\begin{figure*}[!t]
\centering
\includegraphics[width=\linewidth]{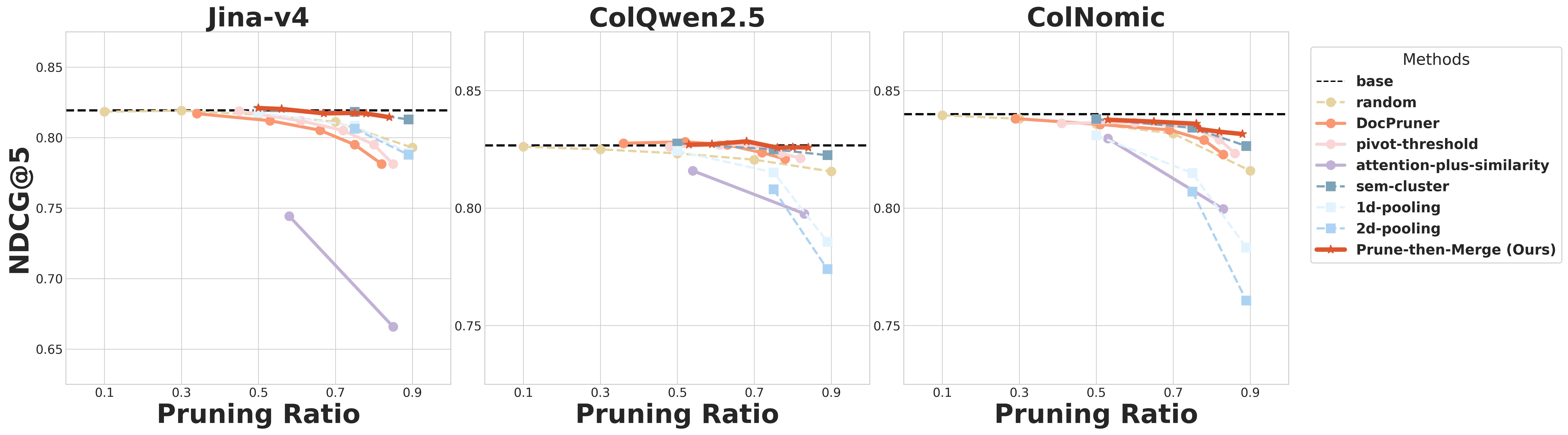}
\vspace{-2em}
\caption{Performance comparison (nDCG@5) between \method and baselines on ViDoSeek \citep{wang2025vidorag} across  Jina-v4 (\textbf{\textit{Left}}), ColQwen2.5 (\textbf{\textit{Middle}}), and ColNomic (\textbf{\textit{Right}}). \textit{solid lines} denote adaptive methods, whereas \textit{dashed lines} denote non-adaptive ones; \textit{circular nodes} represent pruning methods, whereas \textit{square nodes }represent merging ones.}
\label{fig:vidoseek_performance_all}
\end{figure*}

\begin{figure*}[!t]
\centering
\includegraphics[width=\linewidth]{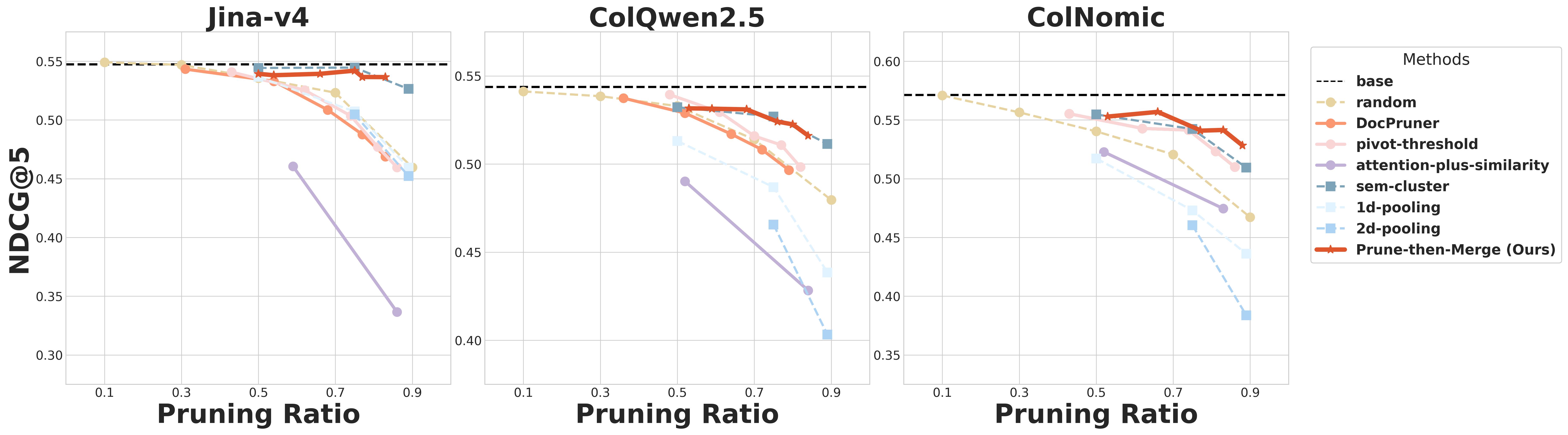}
\vspace{-2em}
\caption{Performance comparison (nDCG@5) between \method and baselines on MMLongBench-Doc \citep{ma2024mmlongbench} across  Jina-v4 (\textbf{\textit{Left}}), ColQwen2.5 (\textbf{\textit{Middle}}), and ColNomic (\textbf{\textit{Right}}). \textit{solid lines} denote adaptive methods, whereas \textit{dashed lines} denote non-adaptive ones; \textit{circular nodes} represent pruning methods, whereas \textit{square nodes }represent merging ones.}
\label{fig:mmlongbench_performance_all}
\end{figure*}

\subsection{Generalization to Multilingual Scenarios}
\label{app:generalization_multilingual}

In this section, we continue to discuss the observations we obtain from Figure \ref{fig:jinavdr_performance_all} and Tables \ref{tab:jinavdr_colqwen}, \ref{tab:jinavdr_colnomic}, and \ref{tab:jinavdr_jina}.

Interestingly, the performance advantage of \method is particularly pronounced on non-Latin script languages such as Japanese and Chinese, which often feature more complex layouts and character densities.
For ColNomic, our method compresses Japanese documents by 89\% while achieving an nDCG@5 of 0.82, whereas the strong merging baseline Sem-Cluster drops to 0.76.
Similarly, for Chinese documents, our method maintains a near-lossless 0.95 nDCG@5 at 89\% compression, while Sem-Cluster degrades to 0.94.
This suggests that the visual and semantic redundancy in documents with complex character sets and layouts is more effectively handled by our two-stage "refine-then-compress" approach, which can untangle noisy patches from meaningful clusters before aggregation.

Furthermore, \method demonstrates superior resilience on languages where the base model's performance is inherently weaker, such as on the Hindi and Dutch datasets.
On the Hindi dataset, where the baseline nDCG@5 is a low 0.47, DocPruner's performance collapses to 0.40 at an 83\% compression rate, whereas our framework sustains a much higher score of 0.46.
A similar pattern is observed on the Dutch dataset, where our method maintains the baseline score of 0.68 up to 82\% compression, while DocPruner drops to 0.56.
This indicates that when the initial signal is weak, aggressive pruning is overly destructive, whereas our hybrid approach better preserves the faint but critical information by first filtering salient signals and then carefully summarizing them, thus preventing catastrophic performance degradation.

\subsection{Generalization to Complex Settings}
\label{app:generalization_complex setting}

In this section, we continue to discuss the observations we obtain from Figure \ref{fig:realmmrag_performance_all} and Tables \ref{tab:realmmrag_colqwen}, \ref{tab:realmmrag_colnomic}, and \ref{tab:realmmrag_jina}.

Moreover, the performance gap between \method and the strong Sem-Cluster baseline appears to be model-dependent, highlighting our framework's unique synergy with certain embedding characteristics.
While our method performs on par with Sem-Cluster when using Jina-v4 and ColNomic, it establishes a clear advantage with ColQwen2.5, as seen in Figure~\ref{fig:realmmrag_performance_all} (Middle).
Specifically, at an 84\% compression rate, \method (0.65 nDCG@5) is notably superior to Sem-Cluster (0.61 nDCG@5).
This suggests that the patch embeddings from ColQwen2.5 may possess a more distinct separation between "noise" and "signal" vectors, making the initial pruning stage of our framework particularly impactful.
By providing a cleaner, pre-filtered set of embeddings to the merging stage, our method capitalizes on this characteristic to produce a more faithful final representation than a merge-only approach operating on raw, unfiltered vectors.

\subsection{Variant Analysis}
\label{app:variant_analysis}

Adaptive, document-specific pruning is decisively superior to non-adaptive strategies that rely on fixed ratios or global thresholds.
The performance curves in Figure~\ref{fig:variant_study_vidore_v2_jina} clearly show that both adaptive methods (\method and \texttt{attention-threshold-adap}) establish a significant performance margin over all non-adaptive variants.
At a moderate pruning rate of approximately 55\%, the adaptive methods achieve an nDCG@5 of 0.56, while the best non-adaptive method, \texttt{attention-threshold-nfp}, lags behind at 0.52, and the simpler \texttt{attention-ratio} variant drops to 0.51.
This performance gap arises because documents have heterogeneous content density; a fixed threshold or ratio is too aggressive for information-rich pages and too lenient for sparse ones, whereas adaptive thresholding tailors the pruning level to each document's unique statistical properties, preserving critical information more effectively.

The performance advantage of our hybrid framework is most pronounced on datasets with higher semantic complexity, such as the human-annotated \texttt{esg\_human\_labeled\_v2}.
A detailed comparison in Appendix Figures~\ref{fig:variant_vidore_v2_jina_esg}-\ref{fig:variant_vidore_v2_jina_esg_human} reveals that while \method consistently leads across all datasets, the performance gap widens on this particular one.
As illustrated in Appendix Figure~\ref{fig:variant_vidore_v2_jina_esg_human}, at an \textasciitilde80\% compression rate, \method scores 0.60 nDCG@5, decisively outperforming the adaptive-pruning-only variant which plummets to 0.54.
In contrast, on the more synthetically generated \texttt{economics\_reports\_v2} dataset (Appendix Figure~\ref{fig:variant_vidore_v2_jina_economic}), this gap is much narrower at the same compression level (0.51 vs. 0.49).
We infer that human-generated queries, being less extractive and more conceptual, demand a more robust semantic representation, which is better provided by our hybrid approach that distills concepts into potent centroids rather than merely discarding patch embeddings.

\begin{figure*}[!t]
\centering
\includegraphics[width=0.8\linewidth]{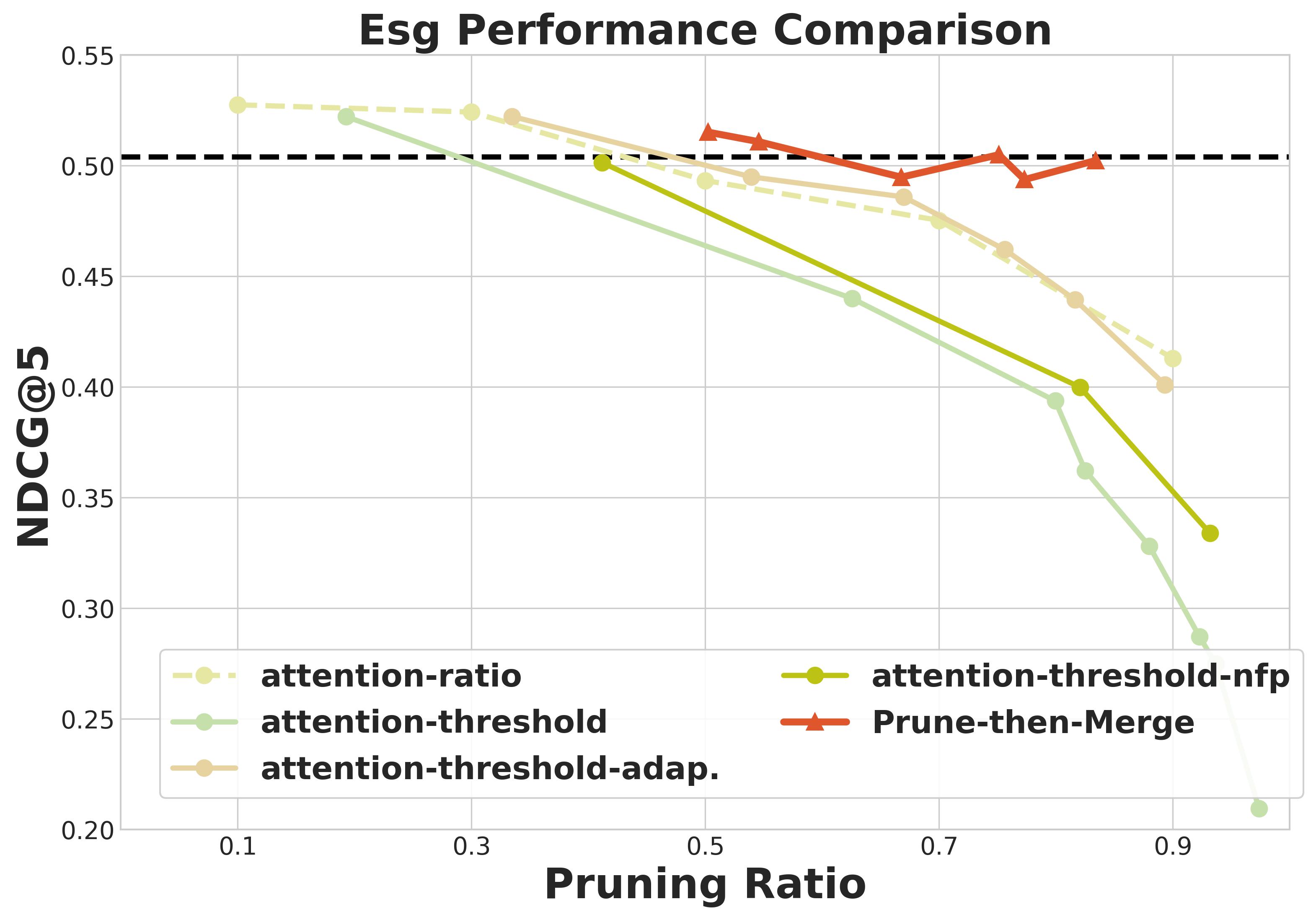}
\caption{Variant comparison of Jina-v4 on \textbf{ESG} dataset of ViDoRe-V2. \textit{solid lines} denote adaptive methods, whereas \textit{dashed lines} denote non-adaptive ones.}
\label{fig:variant_vidore_v2_jina_esg}
\end{figure*}

\begin{figure*}[!t]
\centering
\includegraphics[width=0.8\linewidth]{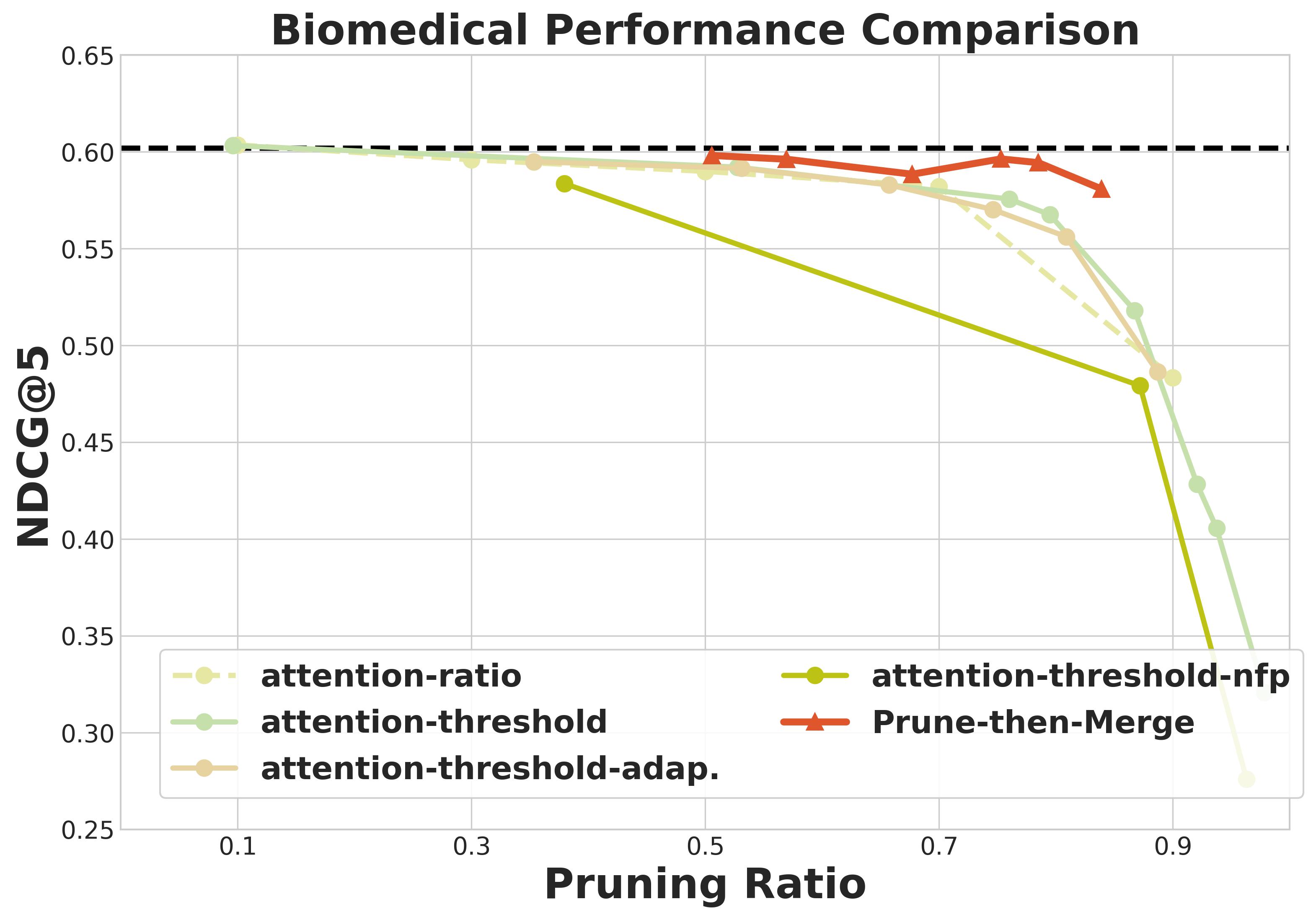}
\caption{Variant comparison of Jina-v4 on \textbf{Biomedical} dataset of ViDoRe-V2. \textit{solid lines} denote adaptive methods, whereas \textit{dashed lines} denote non-adaptive ones.}
\label{fig:variant_vidore_v2_jina_biomedical}
\end{figure*}

\begin{figure*}[!t]
\centering
\includegraphics[width=0.8\linewidth]{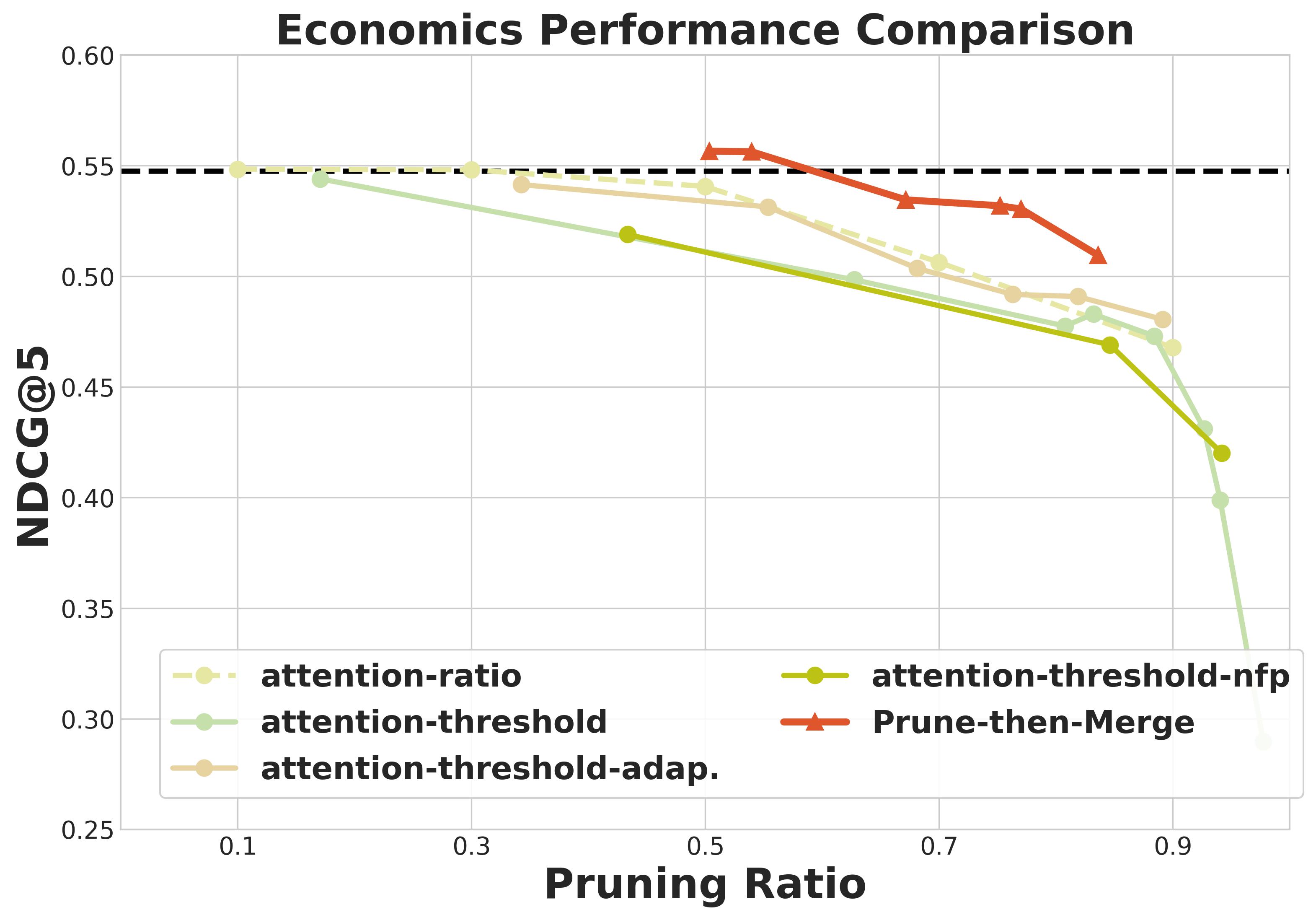}
\caption{Variant comparison of Jina-v4 on \textbf{Economic} dataset of ViDoRe-V2. \textit{solid lines} denote adaptive methods, whereas \textit{dashed lines} denote non-adaptive ones.}
\label{fig:variant_vidore_v2_jina_economic}
\end{figure*}

\begin{figure*}[!t]
\centering
\includegraphics[width=0.8\linewidth]{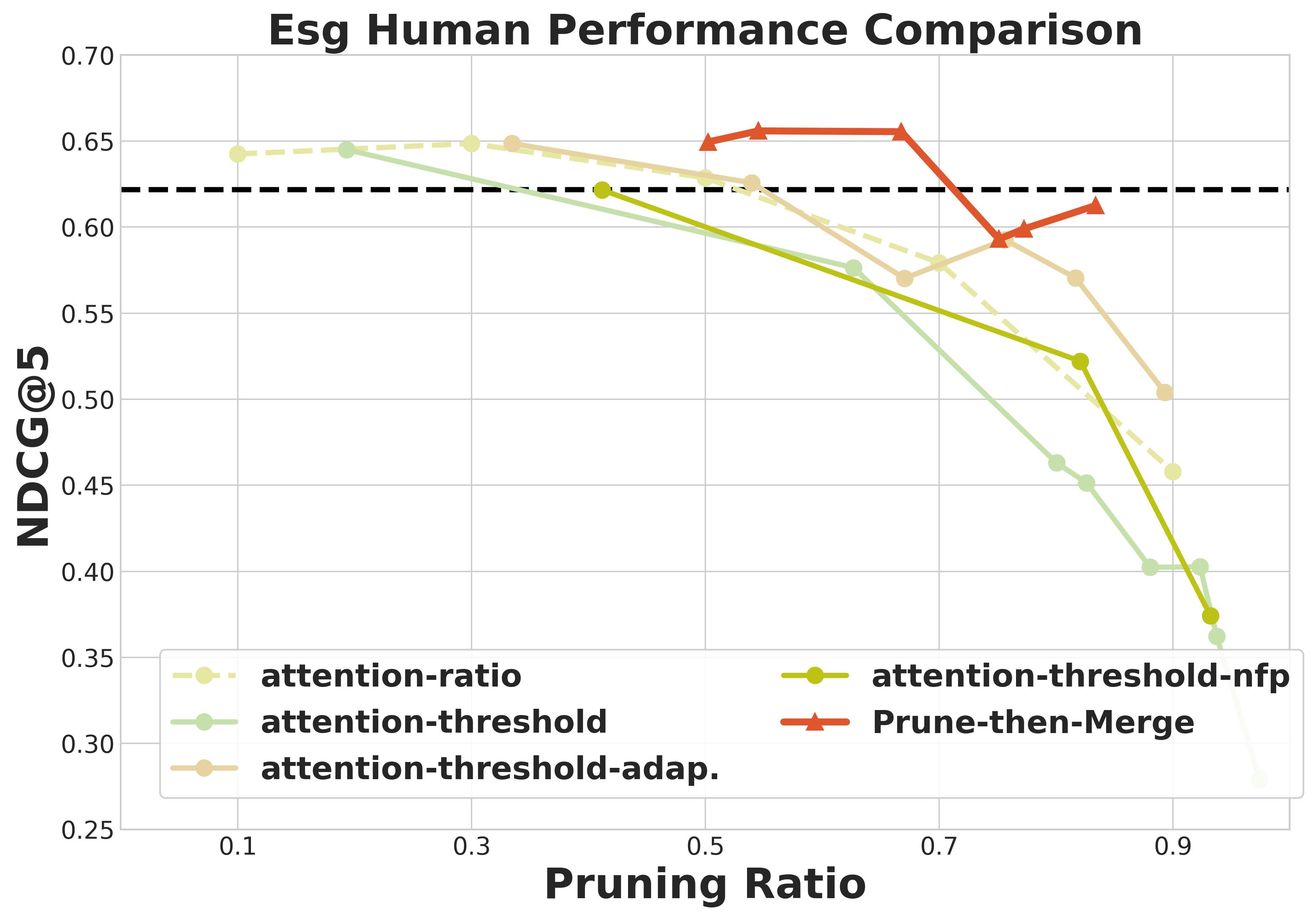}
\caption{Variant comparison of Jina-v4 on \textbf{ESG Human} dataset of ViDoRe-V2. \textit{solid lines} denote adaptive methods, whereas \textit{dashed lines} denote non-adaptive ones.}
\label{fig:variant_vidore_v2_jina_esg_human}
\end{figure*}

\subsection{Efficiency Analysis}
\label{app:efficiency_analysis}

See Table \ref{tab:efficiency_analysis_vidore_v1_jina_full} for details of efficiency analysis of the chosen Jina-v4 on ViDoRe-V1.

\begin{table*}[h!]
 \centering
  \caption{
  Comparison of our \method with base models on ViDoRe-V1, in terms of \textbf{performance}, \textbf{storage}, and \textbf{latency}. The table presents absolute values and the relative change ($\Delta$). ({We set adaptation factor as -0.75 and merging factor as 4; \color{RedOrange}orange} denotes a better outcome and {\color{green(pigment)}green} denotes a worse outcome.)
  }
  \label{tab:efficiency_analysis_vidore_v1_jina_full} 
  \vspace{-0.8em}
  \renewcommand\tabcolsep{6pt} 
  \renewcommand\arraystretch{1.1}
  \footnotesize 
  \begin{tabular}{ll|rrrr} 
    \Xhline{1.2pt}
    \rowcolor{CadetBlue!20} 
    & & \textbf{ColQwen2.5} & \textbf{ColNomic} & \textbf{JinaV4} & \textbf{Overall}\\ 
    \Xhline{1pt}
    
    \multirow{3}{*}{\begin{tabular}{@{}c@{}}\textbf{Performance} (nDCG@5)\end{tabular}} 
    & base & 0.8795 & 0.8999 & 0.8777 & 0.8857 \\
    & \method & 0.8771 & 0.8953 & 0.8727 & 0.8817 \\
    \rowcolor{green(pigment)!30}
    & $\Delta$ & 0.27\% & 0.51\% & 0.57\% & 0.45\% \\
    \Xhline{1pt}

    \multirow{3}{*}{\textbf{Storage}}
    & base & 1.0000 & 1.0000 & 1.0000 & 1.0000 \\
    & \method & 0.4112 & 0.4784 & 0.4723 & 0.4500 \\
    \rowcolor{RedOrange!40}
    & $\Delta$ & 58.88\% & 52.16\% & 52.77\% & 54.60\% \\
    \Xhline{1pt}

    \multirow{3}{*}{\textbf{Latency}}
    & base & 0.41 & 0.45 & 0.51 & 0.46 \\
    & \method & 0.64 & 0.68 & 0.75 & 0.69 \\
    \rowcolor{green(pigment)!30}
    & $\Delta$ & -56.10\% & -51.11\% & -47.06\% & -51.42\% \\
    \Xhline{1.2pt}
  \end{tabular}
  \vspace{-0.7em}
\end{table*}

\begin{table*}[ht]
\centering
\caption{Experimental results of ColQwen on ViDoRe-V1.}
\resizebox{1.0\linewidth}{!}{

}\label{tab:vidore_v1_colqwen}
\end{table*}

\begin{table*}[ht]
\centering
\caption{Experimental results of ColNomic on ViDoRe-V1.}
\resizebox{1.0\linewidth}{!}{
%
}\label{tab:vidore_v1_colnomic}
\end{table*}

\begin{table*}[ht]
\centering
\caption{Experimental results of Jina-v4 on ViDoRe-V1.}
\resizebox{1.0\linewidth}{!}{
%
}\label{tab:vidore_v1_jina}
\end{table*}

\begin{table*}[ht]
\centering
\caption{Experimental results of ColQwen on ViDoRe-V2.}
\resizebox{1.0\linewidth}{!}{
%
}\label{tab:vidore_v2_colqwen}
\end{table*}

\begin{table*}[ht]
\centering
\caption{Experimental results of ColNomic on ViDoRe-V2.}
\resizebox{1.0\linewidth}{!}{
%
}\label{tab:vidore_v2_colnomic}
\end{table*}

\begin{table*}[ht]
\centering
\caption{Experimental results of Jina-v4 on ViDoRe-V2.}
\resizebox{1.0\linewidth}{!}{
%
}\label{tab:vidore_v2_jina}
\end{table*}

\begin{table*}[ht]
\centering
\caption{Experimental results of ColQwen on JinaVDR.}
\resizebox{1.0\linewidth}{!}{
%
}\label{tab:jinavdr_colqwen}
\end{table*}

\begin{table*}[ht]
\centering
\caption{Experimental results of ColNomic on JinaVDR.}
\resizebox{1.0\linewidth}{!}{
%
}\label{tab:jinavdr_colnomic}
\end{table*}

\begin{table*}[ht]
\centering
\caption{Experimental results of Jina-v4 on JinaVDR.}
\resizebox{1.0\linewidth}{!}{
%
}\label{tab:jinavdr_jina}
\end{table*}

\begin{table*}[ht]
\centering
\caption{Experimental results of ColQwen on REAL-MM-RAG.}
\resizebox{1.0\linewidth}{!}{
%
}\label{tab:realmmrag_colqwen}
\end{table*}

\begin{table*}[ht]
\centering
\caption{Experimental results of ColNomic on REAL-MM-RAG.}
\resizebox{1.0\linewidth}{!}{
%
}\label{tab:realmmrag_colnomic}
\end{table*}

\begin{table*}[ht]
\centering
\caption{Experimental results of Jina-v4 on REAL-MM-RAG.}
\resizebox{1.0\linewidth}{!}{
%
}\label{tab:realmmrag_jina}
\end{table*}

\begin{table*}[ht]
\centering
\caption{Experimental results of ColQwen on ViDoSeek.}
\resizebox{0.5\linewidth}{!}{
%
}\label{tab:vidoseek_colqwen}
\end{table*}

\begin{table*}[ht]
\centering
\caption{Experimental results of ColNomic on ViDoSeek.}
\resizebox{0.5\linewidth}{!}{
%
}\label{tab:vidoseek_colnomic}
\end{table*}

\begin{table*}[ht]
\centering
\caption{Experimental results of Jina-v4 on ViDoSeek.}
\resizebox{0.5\linewidth}{!}{
%
}\label{tab:vidoseek_jina}
\end{table*}

\begin{table*}[ht]
\centering
\caption{Experimental results of ColQwen on MMLongBench-Doc.}
\resizebox{0.5\linewidth}{!}{
%
}\label{tab:mmlongbench_colqwen}
\end{table*}

\begin{table*}[ht]
\centering
\caption{Experimental results of ColNomic on MMLongBench-Doc.}
\resizebox{0.5\linewidth}{!}{
%
}\label{tab:mmlongbench_colnomic}
\end{table*}

\begin{table*}[ht]
\centering
\caption{Experimental results of Jina-v4 on MMLongBench-Doc.}
\resizebox{0.5\linewidth}{!}{
%
}\label{tab:mmlongbench_jina}
\end{table*}

\section{Usage of AI Assistant}
\label{app:usage_ai_assistant}

We utilized Google's Gemini 2.5 Pro exclusively for the purpose of polishing the manuscript's language and improving its readability. All other aspects of this research, including the core conceptualization of the framework, experimental design, implementation, and the analysis and interpretation of results, were conducted entirely by the authors.